\theoremstyle{break}
\newtheorem{theorem}{Theorem}
\newtheorem{lemma}{Lemma}
\newcommand{\R}{{\mathbb{R}}}
\newcommand{\V}{{\mathcal{V}}}
\newcommand{\bc}{{\boldsymbol{c}}}
\newcommand{\by}{{\boldsymbol{y}}}
\newcommand{\be}{{\boldsymbol{e}}}
\newcommand{\bx}{{\boldsymbol{x}}}
\newcommand{\AL}{\boldsymbol{\alpha}}
\def\covtype{{\sf covtype}\xspace}
\def\liblinear{{\sf LIBLINEAR}\xspace }
\def\LIBSVM{{ LIBSVM}\xspace}
\def\SVMLIGHT{{ SVMLight}\xspace}
\def\MNIST{{\sf mnist8m}\xspace}
\def\ijcnn{{\sf ijcnn1}\xspace}
\def\covtype{{\sf covtype}\xspace}
\def\webspam{{\sf webspam}\xspace}
\def\census{{\sf census}\xspace}
\def\ijcnn{{\sf ijcnn1}\xspace}
\def\kddcup{{\sf kddcup99}\xspace}
\def\cifar{{\sf cifar}\xspace}
\def\a9a{{\sf a9a}\xspace }
\title{A Divide-and-Conquer Solver for Kernel Support Vector Machines}
\author[1]{Cho-Jui Hsieh}
\author[1]{Si Si} 
\author[1]{Inderjit S. Dhillon}
\affil[1]{Department of Computer Science, University of Texas, Austin}
\date{}
\begin{document}

%
\maketitle

\begin{abstract}
The kernel support vector machine (SVM) is one of the most widely used 
classification methods; however, the amount of computation required 
becomes the bottleneck when facing millions of samples. 
In this paper, we propose and analyze a novel divide-and-conquer 
solver for kernel SVMs (DC-SVM). In the division step, we partition the kernel SVM problem into smaller subproblems by clustering the data, so that
each subproblem can be solved independently and efficiently. 
We show theoretically that the support vectors identified by the subproblem 
solution are likely to be support vectors of the entire kernel SVM problem, provided
that the problem is partitioned appropriately by kernel clustering. 
In the conquer step, the local solutions from the subproblems are used to initialize a 
global coordinate descent solver, which converges quickly as suggested by our analysis. 
By extending this idea, we develop a multilevel Divide-and-Conquer SVM
algorithm with adaptive clustering and early prediction strategy, 
which 
outperforms state-of-the-art methods in terms of 
training speed, testing accuracy, and memory usage. 
As an example,
on the \covtype dataset with half-a-million samples,
DC-SVM is 7 times faster than LIBSVM in obtaining  
the exact SVM solution (to within $10^{-6}$ relative error) 
which achieves 96.15\% prediction accuracy. 
Moreover, with our proposed early prediction strategy, 
DC-SVM achieves about 96\% accuracy 
in only 12 minutes, 
which is more than 100 times faster than LIBSVM. 

\end{abstract}
\section{Introduction}
The support vector machine (SVM) \citep{CC95a} is probably the most widely used classifier in varied machine learning applications. 
For problems that are not linearly separable, kernel SVM uses a ``kernel 
trick'' to implicitly map samples from input space to a high-dimensional feature 
space, where samples become linearly separable. 
Due to its importance, optimization methods for kernel SVM have been widely studied 
\citep{JP98a,TJ98a}, and efficient libraries such as \LIBSVM \citep{CC01a} and 
\SVMLIGHT \citep{TJ98a}
are well developed.
However, the kernel SVM is still hard to scale up when the sample size reaches more 
than one million instances. The 
bottleneck stems from the 
high computational cost and memory requirements of computing and storing the kernel matrix, 
which in general is not sparse. 
By approximating the kernel SVM objective function, 
approximate solvers \citep{ZK12a,fastfood}
avoid high computational cost and memory requirement, but suffer in terms of 
prediction accuracy. 

In this paper, we propose a novel divide and conquer approach (DC-SVM) to efficiently solve 
the kernel SVM problem. DC-SVM achieves 
faster convergence speed compared to state-of-the-art
{\em exact} SVM solvers, as well as better prediction accuracy in much less time
than {\em approximate} solvers.
To accomplish this performance, DC-SVM first divides the full problem into smaller 
subproblems, which can be solved 
independently and 
efficiently.
We theoretically show that the kernel kmeans algorithm is able to minimize
the difference between the solution of subproblems and of the whole problem,  
and support vectors identified by subproblems are very likely to be support vectors 
of the whole problem. 
However, running kernel kmeans on the whole dataset is time consuming, 
so we apply a two-step kernel kmeans procedure to efficiently find the partition. 
In the conquer step, the local solutions from the subproblems are ``glued'' together to 
yield an initial point for the 
global problem.
As suggested by our analysis, 
the coordinate descent method in the final stage converges quickly 
to the global optimal. 



Empirically, our proposed Divide-and-Conquer Kernel SVM solver can reduce 
the objective function
value much faster than existing SVM solvers. For example, on the \covtype dataset with half a million
samples, DC-SVM can find an accurate globally optimal solution (to within $10^{-6}$ accuracy) 
within 3 hours on a single machine with 8 GBytes RAM, while the state-of-the-art solver 
LIBSVM
takes more than 22 hours to achieve a similarly accurate solution (which yields 96.15\% prediction accuracy). 
More interestingly, 
due to the closeness of the subproblem solutions to the global solution, 
we can employ an early prediction approach, using which 
DC-SVM can obtain high test accuracy extremely quickly. 
For example, on the \covtype dataset, by using early prediction DC-SVM achieves 96.03\% prediction accuracy 
within 12 minutes, while other solvers cannot achieve such performance in 10 hours. 

The rest of the paper is outlined as follows. 
We propose the single-level DC-SVM in Section \ref{sec:dcsvm},  
and extend it to the multilevel version in Section \ref{sec:multilevel}. 
Experimental comparison with other state-of-the-art SVM solvers is shown in 
Section \ref{sec:experiment}. 
The relationship
between DC-SVM and other methods is discussed in Section \ref{sec:related},  
and the conclusions are given in Section \ref{sec:conclusion}. 
Extensive experimental comparisons are included in the Appendix. 
\section{Related Work}
\label{sec:related}
The optimization methods for SVM training has been intensively studied.  
Since training an SVM requires a large amount of memory, it is natural 
to apply decomposition methods \citep{JP98a}, where
only a subset of variables is updated at each step. 
Based on 
this idea, 
software packages such as \LIBSVM \citep{CC01a}  and \SVMLIGHT \citep{TJ98a} are well developed.
To speed up the decomposition method, 
\citep{FP04b} proposed a double chunking approach to maintain a chunk of important samples, 
and the shrinking technique \citep{TJ98a} is also widely used to eliminate unimportant samples. 

To speed up kernel SVM training on large-scale datasets, 
it is natural to divide the problem into smaller subproblems, 
and combine the models trained on each partition. 
\citep{RAJ91a} proposed a way to combine models, although in their algorithm subproblems
are not trained independently, while 
\citep{Tresp} discussed a Bayesian prediction scheme (BCM) for model combination.  
\citep{RC02a} partition the training dataset arbitrarily in the beginning, and then iteratively refine
the partition to obtain an approximate kernel SVM solution. \citep{MK06a} applied the above ideas to solve multi-class problems. 
\citep{cascadeSVM} proposed a multilevel approach (CascadeSVM): 
 they randomly build a partition tree of samples
and train the SVM in a ``cascade'' way: only support vectors in the lower level of the tree
are passed to the upper level.
However, no earlier method appears to discuss an elegant way to partition the data.  
In this paper, we theoretically show that kernel kmeans minimizes the error of 
the solution from the subproblems and the global solution.  
Based on this division step, we propose a simple method to combine locally trained SVM models, 
and show that the testing performance is better than BCM in terms of both accuracy and time (as presented in Table \ref{tab:prediction_method}). More importantly, DC-SVM solves the original SVM problem, not just an approximated one. We compare our method with Cascade SVM in the experiments. 

Another line of research proposes to reduce the training time by 
representing the whole dataset using a smaller set of landmark points,
and clustering is an effective way to find landmark points (cluster centers). 
\citep{MJ89a} proposed this idea to train the reduced sized problem with RBF kernel (LTPU); 
 \citep{DP00a} used a similar idea as a preprocessing of the dataset, while \citep{HY05a} further generalized this approach to a hierarchical 
 coarsen-refinement solver for SVM. 
 Based on this idea, the kmeans Nystr\"{o}m
 method \citep{KZ08a} was proposed to approximate the kernel matrix using 
 landmark points. 
\citep{DB04a} proposed to find samples with similar $\AL$ values by clustering, 
so both the clustering goal and training step are quite different from ours. 
All the above approaches focus on modeling the between-cluster (between-landmark points) relationships.  
In comparison, our method emphasizes on preserving the within-cluster relationships
at the lower levels and explores the 
between-cluster information in the upper levels. 
We compare DC-SVM with LLSVM (using kmeans Nystr\"{o}m) and 
LTPU in Section \ref{sec:experiment}. 

There are many other approximate solvers for the kernel SVM, 
including kernel approximation approaches \citep{SF01, ZK12a, fastfood}, 
greedy basis selection \citep{SSK06a}, and online SVM solvers \citep{AB05a}. 
Recently, 
\citep{DeepSVM} proposed an approximate solver to reduce testing time, 
while our work is focused  on reducing the training time of kernel SVM.

\section{Divide and Conquer Kernel SVM with a single level}
  \label{sec:dcsvm}
  Given a set of instance-label pairs 
  $(\bx_i, y_i), i=1, \dots, n, \bx_i\in \R^d$ and $y_i \in \{1,-1\}$, 
  the main task in training the kernel SVM is to solve the following quadratic 
  optimization problem: 
\begin{equation}
  \min_{\AL}   f(\AL) = \frac{1}{2}\AL^T Q \AL - \be^T \AL, \ \text{ s.t. }
 0\leq \AL \leq C,  
   \label{eq:dual_nobias}
 \end{equation}
 where $\be$ is the vector of all ones; $C$ is the balancing parameter between loss 
 and regularization in the SVM primal problem; 
 $\AL\in \R^n$ is the vector of dual variables; and $Q$ is an $n\times n$ matrix 
 with $Q_{ij}=y_i y_j K(\bx_i, \bx_j)$, where $K(\bx_i,\bx_j)$ is the kernel 
 function.
 Note that, as in \citep{SSK06a,TJ06a}, we ignore the ``bias'' term -- 
 indeed, in our experiments reported in Section \ref{sec:experiment}, 
 we did not observe any improvement in test accuracy by including the bias term. 
 Letting $\AL^*$ denote the optimal solution of \eqref{eq:dual_nobias}, 
 the decision value for a test data $\bx$ can be  computed by 
  $ \sum_{i=1}^n \alpha_i^* y_i  K(\bx,\bx_i)$. 

 We begin by describing the single-level version of our proposed algorithm. 
 The main idea behind our divide and conquer SVM solver (DC-SVM) is to divide the data
 into smaller subsets, where each subset can be handled efficiently and independently. 
 The subproblem solutions are 
 then used to initialize
 a coordinate descent solver for the whole problem. 
 To do this, we first partition the dual variables
 into $k$ subsets $\{\V_1, \dots, \V_k\}$, 
 and then solve the respective subproblems independently 
 \begin{equation}
   \min_{\AL_{(c)}} \!  \frac{1}{2}(\AL_{(c)})^T Q_{(c,c)} \AL_{(c)} \! -\! \be^T \AL_{(c)}, 
    \text{ s.t. }
   0\!\leq\! \AL_{(c)}\! \leq\! C, 
   \label{eq:dual_subpb}
 \end{equation}
 where $c=1,\dots,k$,  
 $\AL_{(c)}$ denotes the subvector $\{\alpha_p\mid p\in \V_c\}$
 and $Q_{(c,c)}$ is the submatrix of $Q$ with row and column indexes $\V_c$. 
 
 The quadratic programming problem \eqref{eq:dual_nobias} has $n$ variables, 
 and generally takes $O(n^3)$ time to solve. By dividing it into $k$ subproblems 
 \eqref{eq:dual_subpb} with 
 equal sizes, 
 the time complexity for solving the subproblems can be 
 dramatically reduced to $O(k\cdot (\frac{n}{k})^3)=O(n^3/k^2)$. 
 Moreover, the space requirement is also reduced from $O(n^2)$ 
 to $O(n^2/k^2)$. 
 
 After computing all the subproblem solutions, we concatenate them to form 
 an approximate solution for the whole problem
 $\bar{\AL} = [\bar{\AL}_{(1)},\dots,\bar{\AL}_{(k)}]$, 
 where $\bar{\AL}_{(c)}$ is the optimal solution for 
 the $c$-th subproblem. In the conquer step,
 $\bar{\AL}$ is used to initialize the solver for the whole problem. We show that 
 this procedure achieves faster convergence due to the following reasons: (1) $\bar{\AL}$ is close to the optimal solution for the whole 
 problem ${\AL}^*$, so the solver only requires a few iterations to converge
 (see Theorem \ref{thm:approx}); 
 (2) 
 the set of support vectors of the subproblems is expected to be close to the set of  
 support vectors of the whole problem (see Theorem \ref{thm:support_vector}). Hence, the coordinate descent solver for the whole problem
 converges very quickly. 

 {\bf Divide Step. }
 We now discuss in detail how 
 to divide problem \eqref{eq:dual_nobias} into 
 subproblems.  In order for our proposed method to be efficient, we require
 $\bar{\AL}$ to be close to the optimal solution of the original problem $\AL^*$. 
 In the following, we derive a bound on $\|\bar{\AL}-\AL^*\|_2$ by 
 first showing that $\bar{\AL}$
 is the optimal solution of \eqref{eq:dual_nobias} with an approximate kernel. 
 \begin{lemma}
   \label{thm:tildek}
   $\bar{\AL}$ is the optimal solution of \eqref{eq:dual_nobias}
   with kernel function $K(\bx_i,\bx_j)$ replaced by 
 \begin{equation}
   \bar{K}(\bx_i, \bx_j) = I(\pi(\bx_i), \pi(\bx_j)) K(\bx_i, \bx_j),  
   \label{eq:tildek}
 \end{equation}
 where $\pi(\bx_i)$ is the cluster that $\bx_i$
 belongs to; $I(a,b)=1$ iff $a=b$ and $I(a,b)=0$ otherwise. 
 \end{lemma}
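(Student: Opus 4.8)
The plan is to exploit the block structure created by the modified kernel: replacing $K$ by $\bar K$ zeroes out every entry of the Hessian that links two different clusters, so the quadratic program \eqref{eq:dual_nobias} separates into the $k$ independent subproblems \eqref{eq:dual_subpb}, whose concatenated optima are by definition $\bar{\AL}$.

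First I would fix notation. Let $\bar Q$ be the $n\times n$ matrix with $\bar Q_{ij}=y_iy_j\bar K(\bx_i,\bx_j)$. By \eqref{eq:tildek}, $\bar Q_{ij}=Q_{ij}$ whenever $\pi(\bx_i)=\pi(\bx_j)$ and $\bar Q_{ij}=0$ otherwise; hence, after the permutation of indices that lists the members of $\V_1$ first, then those of $\V_2$, and so on, $\bar Q$ is block diagonal with diagonal blocks $Q_{(1,1)},\dots,Q_{(k,k)}$. I would also remark here that $\bar Q\succeq 0$: each $Q_{(c,c)}$ is a principal submatrix of the PSD matrix $Q$ and is therefore PSD, and a block-diagonal matrix with PSD blocks is PSD, so $\bar K$ is a legitimate kernel and the modified problem is still a convex QP.

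Next I would use this structure to split the objective. For any feasible $\AL=[\AL_{(1)},\dots,\AL_{(k)}]$ the cross terms between distinct clusters vanish, so
\[
\tfrac12\AL^T\bar Q\AL-\be^T\AL=\sum_{c=1}^k\Big(\tfrac12(\AL_{(c)})^TQ_{(c,c)}\AL_{(c)}-\be^T\AL_{(c)}\Big),
\]
and the feasible box $\{0\le\AL\le C\}$ factors as the product of the per-cluster boxes $\{0\le\AL_{(c)}\le C\}$. Therefore minimizing the modified problem is equivalent to minimizing each summand separately over its own block, i.e.\ to solving the $k$ subproblems \eqref{eq:dual_subpb}. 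Since $\bar{\AL}_{(c)}$ is by definition the minimizer of the $c$-th subproblem, the concatenation $\bar{\AL}=[\bar{\AL}_{(1)},\dots,\bar{\AL}_{(k)}]$ is a minimizer of \eqref{eq:dual_nobias} with $K$ replaced by $\bar K$, which is the assertion of the lemma.

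There is no genuinely hard step: the lemma is a direct consequence of separability. The only point that requires a little care is the bookkeeping of the index permutation relating the partition $\{\V_1,\dots,\V_k\}$ to a contiguous block ordering, together with the explicit observation that \emph{both} the quadratic form and the constraint set decouple along that partition; once those are stated, the conclusion is immediate. If one additionally wants uniqueness of $\bar{\AL}$, it suffices to note that each subproblem is a convex QP, but the stated identity itself needs only the block-separable structure.
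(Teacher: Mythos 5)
Your proposal is correct and follows essentially the same route as the paper's own proof: identify the modified Hessian $\bar Q$, observe that the quadratic form, linear term, and box constraints all decouple along the partition, and conclude that the concatenation of the subproblem optima solves the modified problem. The extra remarks on the block-diagonal permutation and positive semidefiniteness of $\bar Q$ are harmless additions not present in (and not needed by) the paper's argument.
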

  \begin{proof}
   When using $\bar{K}$ defined in \eqref{eq:tildek}, the matrix $Q$ in 
   \eqref{eq:dual_nobias} becomes $\bar{Q}$ as given below:  
   \begin{equation}
     \bar{Q}_{i,j} = \begin{cases}
       y_i y_j K(\bx_i, \bx_j),  & \text{ if } \pi(\bx_i) = \pi(\bx_j), \\
       0, & \text{ if } \pi(\bx_i) \neq \pi(\bx_j). 
     \end{cases}
   \end{equation}
   Therefore, the quadratic term in \eqref{eq:dual_nobias} can be decomposed into
   \begin{equation*}
     \AL^T \bar{Q} \AL = \sum_{c=1}^k \AL_{(c)}^T Q_{(c,c)} \AL_{(c)}.  
   \end{equation*}
   The constraints and linear term in \eqref{eq:dual_nobias} are also 
   decomposable, so the subproblems are independent,  
   and concatenation of 
   their optimal solutions, $\bar{\AL}$, is the optimal solution for \eqref{eq:dual_nobias} when $K$ is  
   replaced by $\bar{K}$.  
 \end{proof}

Based on the above lemma, we are able to bound  
 $\|\AL^*-\bar{\AL}\|_2$ by the sum of between-cluster kernel values:
 \begin{theorem}
   \label{thm:approx}
   Given data points $\bx_1, \dots, \bx_n$ and 
   a partition indicator $\{\pi(\bx_1),\dots,\pi(\bx_n)\}$, 
   \begin{equation}
     0\leq f(\bar{\AL}) -f(\AL^*) \leq  (1/2) C^2 D(\pi), 
     \label{eq:bound}
   \end{equation}
   where $f(\AL)$ is the objective function in \eqref{eq:dual_nobias}, 
   $\bar{\AL}$ is as in Lemma \ref{thm:tildek}, $\AL^*$ is the global optimal of \eqref{eq:dual_nobias} and
   $D(\pi)=\sum_{i,j: \pi(\bx_i)\neq \pi(\bx_j)} |K(\bx_i, \bx_j)|$. 
   Furthermore, 
   $\|\AL^*-\bar{\AL}\|_2^2\leq C^2 D(\pi)/\sigma_n $
   where $\sigma_n$ is the smallest eigenvalue of the kernel matrix. 
 \end{theorem}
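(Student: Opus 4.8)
The plan is to treat the two inequalities separately, routing everything through the auxiliary objective $\bar f(\AL) := \tfrac12\AL^T\bar Q\AL - \be^T\AL$ associated with the block‑diagonal matrix $\bar Q$ of Lemma \ref{thm:tildek}. The left inequality $0\le f(\bar\AL)-f(\AL^*)$ is immediate: the blocks $\bar\AL_{(c)}$ solve \eqref{eq:dual_subpb} and hence satisfy $0\le\bar\AL_{(c)}\le C$, so $\bar\AL$ is feasible for \eqref{eq:dual_nobias}, and $\AL^*$ is by definition the global minimizer of $f$ over that feasible set.

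For the upper bound I would first invoke Lemma \ref{thm:tildek}, which says $\bar\AL$ minimizes $\bar f$ over $\{0\le\AL\le C\}$, so $\bar f(\bar\AL)\le\bar f(\AL^*)$. Then I would use the identity
\[
f(\bar\AL)-f(\AL^*) = \big(\bar f(\bar\AL)-\bar f(\AL^*)\big) + \Big[(f-\bar f)(\bar\AL)-(f-\bar f)(\AL^*)\Big]
\]
and discard the first (nonpositive) bracket. Since $(Q-\bar Q)_{ij}=y_iy_jK(\bx_i,\bx_j)$ when $\pi(\bx_i)\ne\pi(\bx_j)$ and is $0$ otherwise, we have $(f-\bar f)(\AL)=\tfrac12\sum_{(i,j):\pi(\bx_i)\ne\pi(\bx_j)}\alpha_i\alpha_j y_iy_jK(\bx_i,\bx_j)$, so the surviving bracket equals $\tfrac12\sum_{(i,j):\pi(\bx_i)\ne\pi(\bx_j)}(\bar\alpha_i\bar\alpha_j-\alpha_i^*\alpha_j^*)\,y_iy_jK(\bx_i,\bx_j)$. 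The crucial move — the one that produces the constant $1/2$ rather than $1$ — is to bound the coefficient before summing: since $\bar\alpha_i\bar\alpha_j$ and $\alpha_i^*\alpha_j^*$ both lie in $[0,C^2]$, their difference is at most $C^2$ in magnitude, while $|y_iy_j|=1$; summing then gives $f(\bar\AL)-f(\AL^*)\le\tfrac12 C^2 D(\pi)$.

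For the final claim I would invoke strong convexity: $Q\succeq\sigma_n I$, so expanding the quadratic $f$ exactly about $\AL^*$ gives $f(\bar\AL)=f(\AL^*)+\nabla f(\AL^*)^T(\bar\AL-\AL^*)+\tfrac12(\bar\AL-\AL^*)^TQ(\bar\AL-\AL^*)$. First‑order optimality of $\AL^*$ on the convex box makes the linear term nonnegative, and the quadratic term is at least $\tfrac{\sigma_n}{2}\|\bar\AL-\AL^*\|_2^2$; hence $\tfrac{\sigma_n}{2}\|\bar\AL-\AL^*\|_2^2\le f(\bar\AL)-f(\AL^*)\le\tfrac12 C^2 D(\pi)$, which rearranges to the stated bound.

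None of the steps is technically hard; the points that need care are (i) passing through $\bar f$ rather than comparing $f(\bar\AL)$ and $f(\AL^*)$ head‑on, (ii) subtracting the two perturbation terms \emph{before} applying the box constraint, so as not to lose a factor of two, and (iii) noting that the last inequality is meaningful only when $\sigma_n>0$, i.e.\ when the kernel matrix is strictly positive definite.
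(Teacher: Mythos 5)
Your proposal is correct and follows essentially the same route as the paper's proof: both pass through the auxiliary objective $\bar f$, use $\bar f(\bar\AL)\le\bar f(\AL^*)$ from Lemma \ref{thm:tildek}, bound the combined perturbation term $\frac12\sum_{i,j:\pi(\bx_i)\neq\pi(\bx_j)}(\bar\alpha_i\bar\alpha_j-\alpha_i^*\alpha_j^*)y_iy_jK(\bx_i,\bx_j)$ by $\frac12 C^2D(\pi)$ via the box constraints, and obtain the $\ell_2$ bound from the exact quadratic expansion about $\AL^*$ together with the nonnegativity of the linear term under the KKT conditions. Your remarks about subtracting the perturbations before bounding (to keep the factor $1/2$) and about needing $\sigma_n>0$ match the paper's implicit assumptions, so there is nothing to add.
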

 \begin{proof}
   We use $\bar{f}(\AL)$ to denote the objective function of \eqref{eq:dual_nobias}
   with kernel $\bar{K}$. 
   By Lemma \ref{thm:tildek}, $\bar{\AL}$ is the minimizer of \eqref{eq:dual_nobias}
   with $K$ replaced by $\bar{K}$, 
   thus $\bar{f}(\bar{\AL}) \leq \bar{f}(\AL^*)$. 
   By the definition of $\bar{f}(\AL^*)$ we can easily show that 
   \begin{equation}
     \bar{f}(\AL^*) = f(\AL^*) - \frac{1}{2}\sum_{i,j: \pi(\bx_i)\neq \pi(\bx_j)} \alpha^*_i \alpha^*_j y_i y_j K(\bx_i, \bx_j) 
     \label{eq:eq_thm1_1}
   \end{equation}
   Similarly, we have 
   \begin{equation}
     \bar{f}(\bar{\AL}) = f(\bar{\AL}) - \frac{1}{2}\sum_{i,j: \pi(\bx_i)\neq \pi(\bx_j)}
     \bar{\alpha}_i \bar{\alpha}_j y_i y_j K(\bx_i,\bx_j), 
   \end{equation}
   Combining with $\bar{f}(\bar{\AL})\leq \bar{f}(\AL^*)$ we have
   \begin{align}
     f(\bar{\AL}) &\leq \bar{f}(\AL^*) + \frac{1}{2}\sum_{i,j: \pi(\bx_i)\neq \pi(\bx_j)} 
     \bar{\alpha}_i \bar{\alpha}_j y_i y_j K(\bx_i,\bx_j), \nonumber\\ 
     &= f(\AL^*) + \frac{1}{2}\!\!\sum_{i,j:\pi(\bx_i)\neq \pi(\bx_j)}\!\!\!\!\!\! (\bar{\alpha}_i \bar{\alpha_j}
     - \alpha^*_i \alpha^*_j) y_i y_j K(\bx_i,\bx_j)  \label{eq:111} \\ 
     &\leq f(\AL^*)\! +\! \frac{1}{2}C^2 D(\pi),  \text{ since } 0\leq \bar{\alpha}_i, \alpha^*_i\leq C \ \text{for all } i. \nonumber
   \end{align}
   Also, since $\AL^*$ is the optimal solution of \eqref{eq:dual_nobias}
   and $\bar{\AL}$ is a feasible solution, 
   $f(\AL^*)\leq f(\bar{\AL})$, thus proving the first part of the theorem. 

   Let $\sigma_n$ be the smallest singular value of the positive definite
   kernel matrix $K$. 
   Since $Q = \text{diag}(\by) K \text{diag}(\by)$, $Q$ and $K$ have identical singular values. 
   Suppose we write $\bar{\AL} = \AL^* + \Delta \AL$, 
   \begin{equation*}
     f(\bar{\AL}) = f(\AL^*) + (\AL^*)^T Q \Delta \AL + \frac{1}{2}(\Delta\AL)^T Q \Delta\AL -
     \be^T \Delta\AL. 
   \end{equation*}
   The optimality condition for \eqref{eq:dual_nobias} is 
   \begin{equation}
     \label{eq:thm1_opt}
     \nabla_i f(\AL^*)  \begin{cases}
      = 0 &\text{ if } 0 <\alpha^*_i <C, \\
      \geq 0 &\text{ if } \alpha^*_i = 0, \\
      \leq 0 & \text{ if } \alpha^*_i = C,  
     \end{cases}
   \end{equation}
   where $\nabla f(\AL^*) = Q\AL^* - \be$. 
   Since $\bar{\AL}$ is a feasible solution, it is easy to see that $(\Delta\AL)_i \geq 0$ if $\alpha^*_i=0$, 
   and $(\Delta\AL)_i \leq 0$ if $\alpha^*_i = C$. Thus, 
   \begin{equation*}
     (\Delta\AL)^T (Q \AL^*  - \be)
     = \sum_{i=1}^n (\Delta\AL)_i ((Q\AL^*)_i -1) \geq 0.  \\
   \end{equation*}
   So $f(\bar{\AL}) \ge f(\AL^*) + \frac{1}{2}\Delta\AL^T Q \Delta\AL \geq f(\AL^*) + 
   \frac{1}{2}\sigma_n \|\Delta\AL\|_2^2  
   $. Since we already know that $f(\bar{\AL})\leq f(\AL^*)+\frac{1}{2}C^2 D(\{\bx_i\}_{i=1}^n, \pi)
   $, this implies $\|\AL^*-\bar{\AL}\|_2^2 \leq C^2 D(\pi)/\sigma_n $.  
 \end{proof}

 In order to minimize $\|\AL^*-\bar{\AL}\|$, we want to find a partition with 
 small $D(\pi)$. Moreover, a balanced partition is preferred 
 to achieve faster training speed. 
 This can be done
 by the kernel kmeans 
 algorithm, which aims to minimize the off-diagonal values of the kernel matrix
 with a balancing normalization.

 We now show that
 the bound derived in Theorem \ref{thm:approx} is reasonably tight in practice. 
 On a subset (10000 instances) of the \covtype data, 
 we try different numbers of clusters $k=8,16,32,64,128$; 
 for each $k$, we use kernel kmeans to obtain the data partition 
 $\{\V_1,\dots,\V_k\}$, 
 and then compute $C^2D(\pi)/2$ (the right hand side of \eqref{eq:bound}) 
 and $f(\bar{\AL})-f(\AL^*)$ (the left hand side of \eqref{eq:bound}). 
 The results are presented in Figure \ref{fig:bound}.
 The left panel shows the bound (in red) and the difference in objectives $f(\bar{\AL})-f(\AL^*)$
 in absolute scale, while the right panel shows these values in a log scale. 
 Figure \ref{fig:bound} shows that the bound is quite close to the difference 
 in objectives in an absolute sense (the red and blue curves nearly overlap), 
 especially compared to the difference in objectives when the 
 data is partitioned randomly (this also shows effectiveness of the kernel kmeans 
 procedure). 
 Thus, our data partitioning scheme and subsequent solution of the subproblems 
 leads to good approximations to the global kernel SVM problem. 

\begin{figure}[htb]
  \centering
\begin{tabular}{cc}
  \subfloat[\covtype 10000 samples. ]{\includegraphics[width=0.45\textwidth]{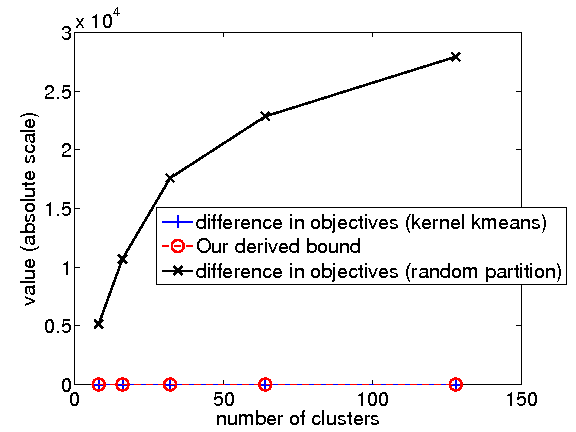}}
  &
  \subfloat[\covtype 10000 samples (log scale). ]{\includegraphics[width=0.45\textwidth]{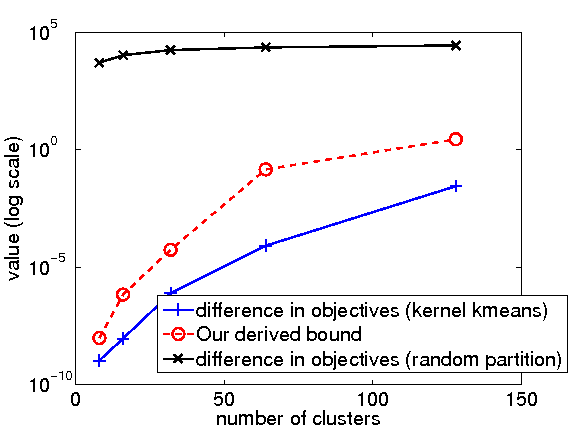}}
\end{tabular}
\caption{Demonstration of the bound in Theorem \ref{thm:approx} -- our data 
partitioning scheme leads to good approximations to the global solution $\AL^*$.
The left plot is on an absolute scale, while the right one is on a logarithmic scale. 
\label{fig:bound}  }
\end{figure}

\begin{figure*}[htb]
\begin{tabular}{cccc}
  \hspace{-35pt}
  \subfloat[rbf kernel, precision\label{fig:rbf_p} ]{\includegraphics[width=0.27\textwidth]{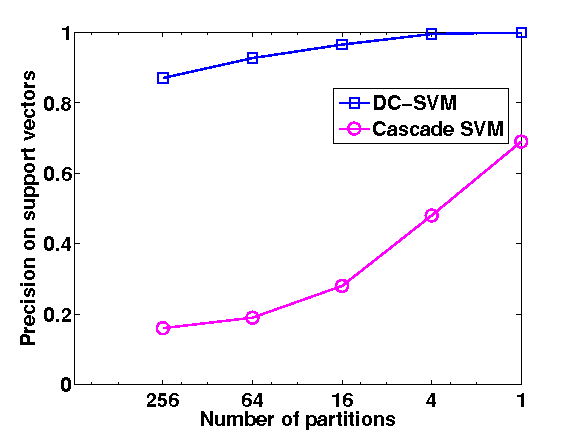}} & \hspace{-10pt}
  \subfloat[rbf kernel, recall\label{fig:rbf_r} ]{\includegraphics[width=0.27\textwidth]{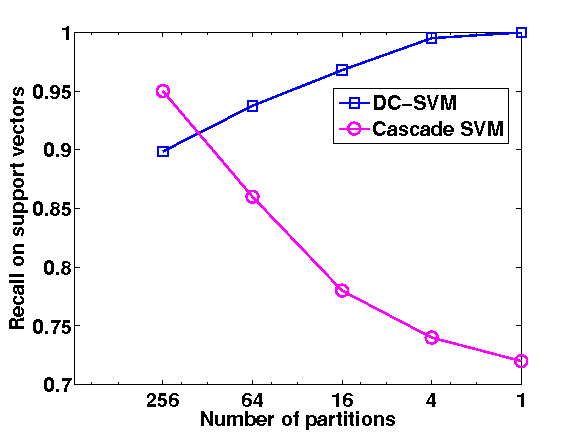}} &  \hspace{-10pt}
  \subfloat[rbf kernel, time vs. precision \label{fig:rbf_p_comp}]{\includegraphics[width=0.27\textwidth]{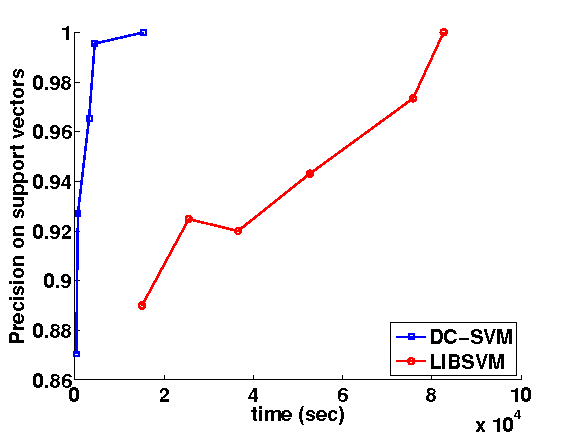}} & \hspace{-10pt}
  \subfloat[rbf kernel, time vs. recall\label{fig:rbf_r_comp} ]{\includegraphics[width=0.27\textwidth]{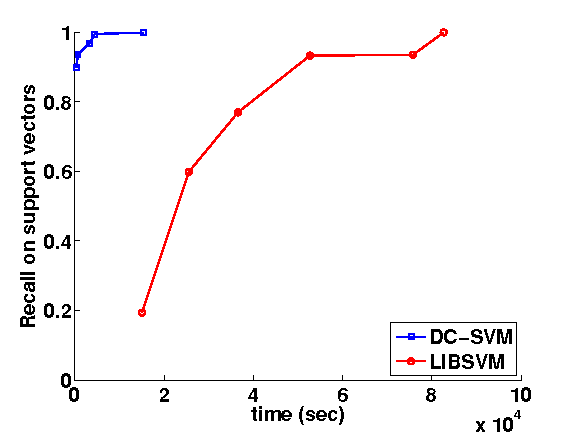}} 
  \\ 
  \hspace{-35pt}
  \subfloat[polynomial kernel, precision\label{fig:poly_p} ]{\includegraphics[width=0.27\textwidth]{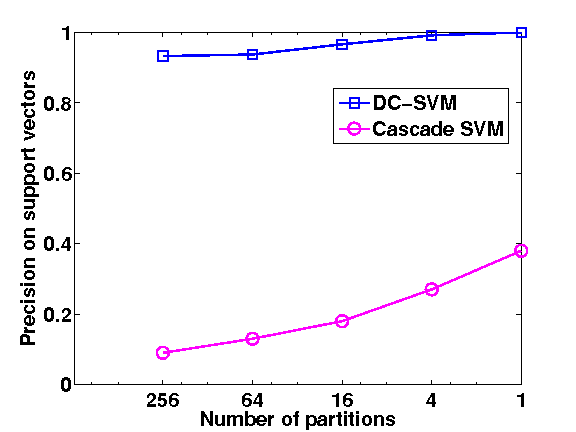}} & \hspace{-10pt}
  \subfloat[polynomial kernel, recall\label{fig:poly_r} ]{\includegraphics[width=0.27\textwidth]{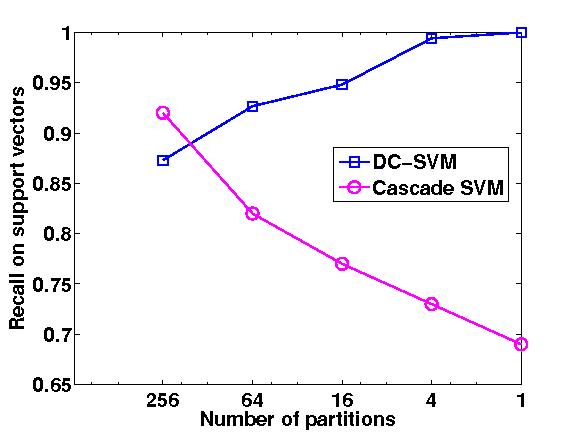}} & \hspace{-10pt}
  \subfloat[polynomial kernel,
  time vs. precision\label{fig:poly_p_comp} ]{\includegraphics[width=0.27\textwidth]{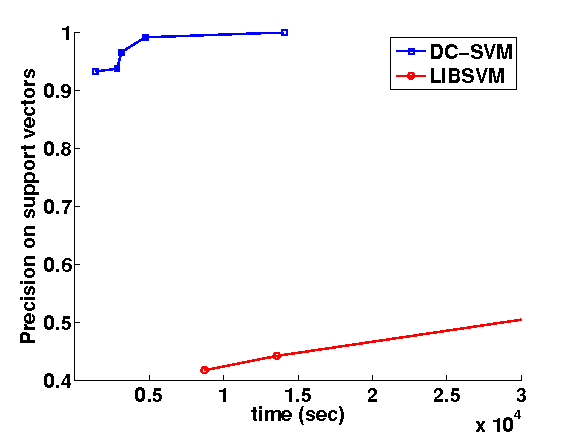}} & \hspace{-10pt}
\subfloat[polynomial kernel, time vs. recall \label{fig:poly_r_comp}]{\includegraphics[width=0.27\textwidth]{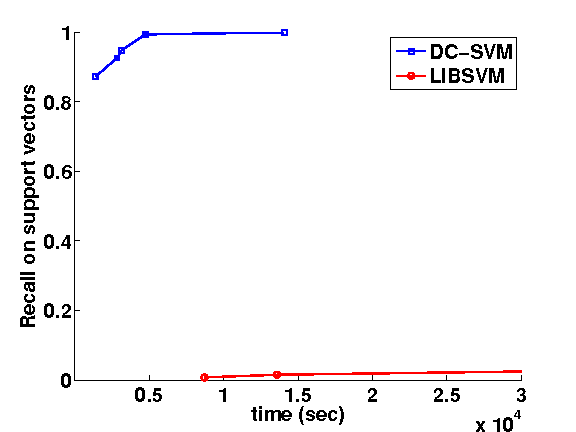}}
\end{tabular}
\caption{ 
Our multilevel DC-SVM algorithm computes support vectors for subproblems
during the ``conquer'' phase. The above plots show that DC-SVM identifies 
support vectors more accurately (Figure \ref{fig:rbf_p}, \ref{fig:rbf_r}, \ref{fig:poly_p}, 
\ref{fig:poly_r}) than cascade SVM, 
and more quickly than the shrinking strategy in \LIBSVM. 
%
}
\label{fig:svbylevel}
\end{figure*}

 However, kernel kmeans has $O(n^2 d)$ time complexity, 
 which is too expensive for large-scale problems.
Therefore we consider 
 a simple two-step kernel kmeans approach as in \citep{RC11a}. 
 The two-step kernel kmeans algorithm first runs kernel kmeans on $m$ randomly 
 sampled data points ($m\ll n$)
 to construct cluster centers in the kernel space. Based on these centers, 
 each data point computes its distance to cluster centers and decides which cluster it belongs to. The algorithm has time complexity $O(nmd)$ and space complexity $O(m^2)$.

 A key facet of our proposed divide and conquer algorithm
 is that the set of support vectors from the subproblems 
 $\bar{S}:=\{i \mid \bar{\alpha}_i >0 \}$, where $\bar{\alpha}_i$ is the $i$-th 
 element of $\bar{\AL}$, is very close to that of the whole problem $S:=\{i\mid \alpha^*_i > 0\}$.  
Letting $\bar{f}(\AL)$ denote the objective function of \eqref{eq:dual_nobias}
with kernel $\bar{K}$ defined in \eqref{eq:tildek}, 
the following theorem shows that when 
$\bar{\alpha}_i=0$ ($\bx_i$ is not a support vector of the subproblem) and 
$\nabla_i \bar{f}(\bar{\AL})$
is large enough, then $\bx_i$ will not be a support vector of the whole problem. 
\begin{theorem}
  \label{thm:support_vector}
  For any $i\in \{1,\dots,n\}$, if $\bar{\alpha}_i =0$ and 
  \begin{equation*}
    \nabla_i \bar{f}(\bar{\AL})> CD(\pi) (1+\sqrt{n}K_{max}/\sqrt{\sigma_n D(\pi) }),
  \end{equation*}
  where $K_{max}=\max_i K(\bx_i,\bx_i)$,  
  then $\bx_i$ will not be a support vector of the whole problem, i.e., $\alpha^*_i=0$.
  \end{theorem}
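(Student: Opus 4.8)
The plan is to reduce everything to a single statement about the gradient of $f$ at the true optimum $\AL^*$. By the optimality conditions \eqref{eq:thm1_opt} for problem \eqref{eq:dual_nobias}, if $\nabla_i f(\AL^*)>0$ then $\alpha_i^*$ can be neither $C$ (which would force $\nabla_i f(\AL^*)\leq 0$) nor strictly inside $(0,C)$ (which would force $\nabla_i f(\AL^*)=0$), so necessarily $\alpha_i^*=0$. Hence it suffices to show that the hypothesis on $\nabla_i\bar f(\bar\AL)$ implies $\nabla_i f(\AL^*)>0$, and for that I only need to control the discrepancy $|\nabla_i f(\AL^*)-\nabla_i\bar f(\bar\AL)|$ and subtract it from the assumed lower bound on $\nabla_i\bar f(\bar\AL)$.

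To bound the discrepancy I would write $\nabla f(\AL)=Q\AL-\be$ and $\nabla\bar f(\AL)=\bar Q\AL-\be$, with $\bar Q$ as in the proof of Lemma \ref{thm:tildek}, and split
$$\nabla_i f(\AL^*)-\nabla_i\bar f(\bar\AL)=\big((Q-\bar Q)\AL^*\big)_i+\big(\bar Q(\AL^*-\bar\AL)\big)_i .$$
The first term equals $\sum_{j:\pi(\bx_i)\neq\pi(\bx_j)}y_iy_jK(\bx_i,\bx_j)\alpha_j^*$, whose magnitude is at most $C\sum_{j:\pi(\bx_i)\neq\pi(\bx_j)}|K(\bx_i,\bx_j)|\leq CD(\pi)$, using $0\leq\alpha_j^*\leq C$ and the fact that the inner sum is merely the ``row $i$'' portion of $D(\pi)$. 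For the second term, Cauchy--Schwarz gives $|(\bar Q(\AL^*-\bar\AL))_i|\leq\|\bar Q_{i,:}\|_2\,\|\AL^*-\bar\AL\|_2$; here I would invoke positive semidefiniteness of $K$ to get $|K(\bx_i,\bx_j)|\leq\sqrt{K(\bx_i,\bx_i)K(\bx_j,\bx_j)}\leq K_{max}$, hence $\|\bar Q_{i,:}\|_2\leq\sqrt{n}\,K_{max}$, together with the second conclusion of Theorem \ref{thm:approx}, namely $\|\AL^*-\bar\AL\|_2\leq C\sqrt{D(\pi)/\sigma_n}$.

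Adding the two estimates gives $|\nabla_i f(\AL^*)-\nabla_i\bar f(\bar\AL)|\leq CD(\pi)+C\sqrt{n}\,K_{max}\sqrt{D(\pi)/\sigma_n}=CD(\pi)\big(1+\sqrt{n}\,K_{max}/\sqrt{\sigma_n D(\pi)}\big)$, so under the hypothesis $\nabla_i f(\AL^*)\geq\nabla_i\bar f(\bar\AL)-CD(\pi)\big(1+\sqrt{n}\,K_{max}/\sqrt{\sigma_n D(\pi)}\big)>0$, and the KKT argument of the first paragraph then forces $\alpha_i^*=0$. The only genuinely delicate point is the bookkeeping that makes the two error terms recombine into exactly the stated right-hand side — in particular keeping straight that $\sum_{j:\pi(\bx_i)\neq\pi(\bx_j)}|K(\bx_i,\bx_j)|$ is one row of $D(\pi)$ rather than all of it, and that factoring out $CD(\pi)$ turns $\sqrt{D(\pi)}/\sqrt{\sigma_n}$ into $1/\sqrt{\sigma_n D(\pi)}$. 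I would also note that the assumption $\bar\alpha_i=0$ is not actually used in the estimate: it already follows from $\nabla_i\bar f(\bar\AL)>0$ and the optimality of $\bar\AL$ for $\bar f$, and is stated mainly to match the ``support vectors of the subproblems'' reading of the theorem.
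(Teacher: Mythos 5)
Your proof is correct and follows essentially the same route as the paper: both reduce to showing $(Q\AL^*)_i - 1 > 0$ via the KKT conditions, split the gradient discrepancy into a between-cluster term bounded by $CD(\pi)$ and a perturbation term bounded by $\sqrt{n}K_{max}C\sqrt{D(\pi)/\sigma_n}$ using the second part of Theorem \ref{thm:approx}. The only cosmetic differences are that you expand $Q\AL^* - \bar{Q}\bar{\AL}$ as $(Q-\bar{Q})\AL^* + \bar{Q}\Delta\AL$ rather than $(Q-\bar{Q})\bar{\AL} + Q\Delta\AL$, and use Cauchy--Schwarz on the row of $\bar{Q}$ instead of an $\ell_\infty$--$\ell_1$ bound followed by $\|\cdot\|_1\le\sqrt{n}\|\cdot\|_2$; your side remark that the hypothesis $\bar{\alpha}_i=0$ is redundant given $\nabla_i\bar{f}(\bar{\AL})>0$ is also accurate.
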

\begin{proof}
  Let $\Delta Q = Q-\bar{Q}$ and $\Delta \AL = \AL^* - \bar{\AL}$. 
  From the optimality condition for \eqref{eq:dual_nobias} (see \eqref{eq:thm1_opt}), 
  we know that $\alpha^*_i =0$ if 
  $(Q\AL^*)_i > 1$. Since 
  $Q\AL^* = (\bar{Q}+\Delta Q)(\bar{\AL} + \Delta \AL)$, 
  we see that
  \begin{align*}
    (Q\AL^*)_i &= (\bar{Q}\bar{\AL})_i + (\Delta Q \bar{\AL})_i +  ( (\bar{Q}+\Delta Q) \Delta \AL)_i. \\
    &= (\bar{Q}\bar{\AL})_i + \sum_{j:\pi(\bx_i)\neq \pi(\bx_j)} y_iy_j K(\bx_i,\bx_j) \bar{\alpha}_j 
     + \sum_{j} y_i y_j K(\bx_i,\bx_j) (\Delta\AL)_j \\
    &\geq  (\bar{Q}\bar{\AL})_i  - C D(\pi) - K_{max} \|\Delta\AL\|_1 \\
    & \geq  (\bar{Q}\bar{\AL})_i  - C D(\pi) - \sqrt{n}K_{max} C \sqrt{D(\pi)} / \sqrt{\sigma_n} \\
    &= (\bar{Q}\bar{\AL})_i -CD(\pi) \left(1+\frac{\sqrt{n}K_{max}}{\sqrt{\sigma_n D(\pi) }}\right). 
  \end{align*}
  The condition stated in the theorem implies $(\bar{Q}\bar{\AL})_i  > 1+ 
  CD(\pi) (1+\frac{\sqrt{n}K_{max}}{\sqrt{\sigma_n D(\pi) }})$, 
  which implies 
  $(Q\AL^*)_i -1> 0$, so from the optimality condition \eqref{eq:thm1_opt}, $\alpha^*_i=0$. 
\end{proof}
In practice also, we observe that DC-SVM can identify the set of support 
vectors of the whole problem very quickly. 
Figure \ref{fig:svbylevel} demonstrates that DC-SVM identifies support vectors
much faster than the shrinking strategy implemented in \LIBSVM \citep{CC01a}
(we discuss these results in more detail in Section \ref{sec:multilevel}).


  {\bf Conquer Step. }
  After computing $\bar{\AL}$ from the subproblems, we use $\bar{\AL}$ to initialize the solver for the whole problem. 
  In principle, we can use any SVM solver in our divide and conquer framework,  
  but we focus on using 
  coordinate descent method as in \LIBSVM
  to solve the whole problem. 
  The main idea is to update one variable at a time, 
  and always choose the 
   $\alpha_i$ with the largest gradient value to update. 
  The benefit of applying coordinate descent is that we can avoid a 
  lot of unnecessary access to the kernel matrix entries if $\alpha_i$ never changes 
  from zero to nonzero.
  Since $\bar{\AL}$'s are close to $\AL^*$, the $\bar{\AL}$-values for most vectors 
  that are not support vectors will not become nonzero, 
  and so the algorithm converges quickly. 
  \section{Divide and Conquer SVM with multiple levels}
  \label{sec:multilevel}
  There is a trade-off in choosing the number of clusters $k$ for a single-level DC-SVM with only 
  one divide and conquer step. When $k$ 
  is small, the subproblems have similar sizes as the original problem, so 
 we will not gain much speedup.
  On the other hand, when we increase $k$, time complexity for solving subproblems can be reduced, 
  but the resulting $\bar{\AL}$ can be quite different from $\AL^*$ according to 
  Theorem \ref{thm:approx}, so the conquer step will be slow. 
Therefore, we propose to run DC-SVM with multiple levels to further reduce the 
time for solving the subproblems, and meanwhile still obtain $\bar{\AL}$ 
values that are close to $\AL^*$. 

In multilevel DC-SVM,   
at the $l$-th level, we 
 partition the whole dataset into $k^l$ clusters 
 $\{\V^{(l)}_1, \dots, \V^{(l)}_{k^l}\}$, and solve those $k^l$ subproblems independently 
 to get $\bar{\AL}^{(l)}$. In order to solve each subproblem efficiently, we use 
 the solutions from the lower level $\bar{\AL}^{(l+1)}$ to initialize the 
 solver at the $l$-th level, so each level requires very few 
 iterations. 
 This allows us to use small values of $k$, 
for example, we use $k=4$ for all the experiments. 
 In the following, we discuss more insights to further speed up our procedure. 

{\bf Adaptive Clustering. }
The two-step kernel kmeans approach has time complexity $O(nmd)$, so the number of samples $m$
cannot be too large. In our implementation we use $m=1000$. 
When the data set is very large, the performance of two-step kernel kmeans may 
not be good because 
we sample only a few data points. This will influence the 
performance of DC-SVM.

To improve the clustering for DC-SVM, we propose the following adaptive clustering 
approach. The main idea is to explore the sparsity of $\AL$ in the SVM problem, 
and sample from the set of support vectors to perform two-step kernel 
kmeans. 
The number of support vectors is  
generally much smaller than $n$ because of the bound
constraints. Suppose we 
are at the $l$-th level, and the current set of support vectors 
is defined by $\bar{S} = \{i\mid \bar{\alpha}_i > 0\}$. 
Suppose the set of support vectors for the final solution is given by $S^* = \{i \mid \alpha^*_i >0\}$. 
We can define the sum of off-diagonal elements on $\bar{S}\cup S^*$ as
 $D_{S^*\cup\bar{S}}(\pi)=\sum_{i,j\in S^*\cup\bar{S} \text{ and }\pi(\bx_i)\neq \pi(\bx_j)} |K(\bx_i, \bx_j)|$. 
The following theorem shows that we can refine the bound 
in Theorem \ref{thm:approx}:  
  \begin{theorem}
    \label{thm:sv_kmeans}
       Given data points $\bx_1, \dots, \bx_n$ and 
   a partition $\{\V_1, \dots, \V_k\}$ with indicators $\pi$,  
   \begin{equation*}
     0\leq f(\bar{\AL})-f(\AL^*) \leq (1/2) C^2 D_{S^*\cup \bar{S}}(\pi ). 
   \end{equation*}
   Furthermore, $\|\AL^*-\bar{\AL}\|_2^2 \leq C^2 D_{S^*\cup\bar{S}}(\pi)/\sigma_n$. 
  \end{theorem}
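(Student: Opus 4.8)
The plan is to revisit the proof of Theorem~\ref{thm:approx} almost verbatim, changing only the bookkeeping of which off-diagonal pairs can contribute to the error term. Recall that in that proof, combining Lemma~\ref{thm:tildek} (which gives $\bar{f}(\bar{\AL})\le\bar{f}(\AL^*)$) with \eqref{eq:eq_thm1_1} and the corresponding identity for $\bar{f}(\bar{\AL})$ produced the inequality \eqref{eq:111}:
\[
  f(\bar{\AL}) \le f(\AL^*) + \frac{1}{2}\!\!\sum_{i,j:\pi(\bx_i)\neq\pi(\bx_j)}\!\!\!(\bar{\alpha}_i\bar{\alpha}_j - \alpha^*_i\alpha^*_j)\,y_iy_jK(\bx_i,\bx_j).
\]
The first step I would take is to observe that every summand indexed by a pair $(i,j)$ with $i\notin S^*\cup\bar{S}$ (or, symmetrically, $j\notin S^*\cup\bar{S}$) vanishes identically: if $i\notin S^*\cup\bar{S}$ then $\bar{\alpha}_i=0$ and $\alpha^*_i=0$, so both $\bar{\alpha}_i\bar{\alpha}_j$ and $\alpha^*_i\alpha^*_j$ are zero no matter what $j$ is. Hence the sum effectively ranges only over $i,j\in S^*\cup\bar{S}$ with $\pi(\bx_i)\neq\pi(\bx_j)$.

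The second step is the same elementary bounding as in Theorem~\ref{thm:approx}, now applied to this restricted sum: since $0\le\bar{\alpha}_i,\alpha^*_i\le C$ for all $i$ we have $|\bar{\alpha}_i\bar{\alpha}_j-\alpha^*_i\alpha^*_j|\le C^2$, and $|y_iy_jK(\bx_i,\bx_j)|=|K(\bx_i,\bx_j)|$, so the sum is at most $C^2\sum_{i,j\in S^*\cup\bar{S},\,\pi(\bx_i)\neq\pi(\bx_j)}|K(\bx_i,\bx_j)| = C^2 D_{S^*\cup\bar{S}}(\pi)$. This yields $f(\bar{\AL})-f(\AL^*)\le\tfrac{1}{2}C^2 D_{S^*\cup\bar{S}}(\pi)$; the lower bound $f(\bar{\AL})\ge f(\AL^*)$ needs no change, since it only uses feasibility of $\bar{\AL}$ together with optimality of $\AL^*$ for \eqref{eq:dual_nobias}.

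For the norm bound I would reuse, without modification, the chain of inequalities from the proof of Theorem~\ref{thm:approx} that gives $f(\bar{\AL})\ge f(\AL^*)+\tfrac{1}{2}\sigma_n\|\Delta\AL\|_2^2$: that argument invokes only the optimality conditions \eqref{eq:thm1_opt} and the fact that $\sigma_n$ is the smallest eigenvalue of the kernel matrix, never $D(\pi)$, so it transfers directly. Combining it with the sharpened upper bound $f(\bar{\AL})\le f(\AL^*)+\tfrac{1}{2}C^2 D_{S^*\cup\bar{S}}(\pi)$ gives $\|\AL^*-\bar{\AL}\|_2^2\le C^2 D_{S^*\cup\bar{S}}(\pi)/\sigma_n$. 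I do not expect any real obstacle here; the only new ingredient is the observation that summands outside $S^*\cup\bar{S}$ drop out, and since $D_{S^*\cup\bar{S}}(\pi)\le D(\pi)$ (it is a sub-sum of nonnegative terms) the result is genuinely a refinement of Theorem~\ref{thm:approx}. The one point worth stating carefully in the write-up is that $\bar{S}$ and $S^*$ as defined depend on the particular optimal solutions, and no assumption beyond $0\le\bar{\AL},\AL^*\le C$ is needed for the vanishing argument.
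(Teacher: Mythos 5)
Your proposal is correct and follows essentially the same route as the paper: restrict the sum in \eqref{eq:111} to pairs with $i,j\in S^*\cup\bar{S}$ because all other summands vanish (as $\bar\alpha_i=\alpha^*_i=0$ outside that set), bound the restricted sum by $C^2 D_{S^*\cup\bar{S}}(\pi)$, and reuse the strong-convexity argument from Theorem~\ref{thm:approx} verbatim for the norm bound. If anything, your write-up is slightly more careful than the paper's, which restricts the sum to $i,j\in S^*$ in its displayed inequality where $i,j\in S^*\cup\bar{S}$ is what is actually needed and used.
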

  \begin{proof}
    Similar to the proof in Theorem \ref{thm:approx}, 
    we use $\bar{f}(\AL)$ to denote the objective function of \eqref{eq:dual_nobias}
    with kernel $\bar{K}$.  
    Combine \eqref{eq:111} with the fact that $\alpha^*_i=0 \ \ \forall i\notin S^*$
    and $\bar{\alpha}_i=0 \ \ \forall i\notin \bar{S}$, 
    we have
    \begin{align*}
      f(\bar{\AL}) & \leq  f(\AL^*)  - \frac{1}{2}  \sum_{i,j: \pi(\bx_i)\neq \pi(\bx_j) \text{ and } i,j\in S^*} (\bar{\alpha}_i\bar{\alpha}_j - \alpha^*_i \alpha^*_j)   y_i y_j K(\bx_i,\bx_j)\\
      & \leq f(\AL^*) + \frac{1}{2} C^2 D_{S^*\cup\bar{S}}(\pi). 
    \end{align*}
    The second part of the proof is identical to the proof of the second part of Theorem \ref{thm:approx}. 
  \end{proof}
  The above observations suggest that if we know the set of support vectors $\bar{S}$
  and $S^*$, $\|\AL^*-\bar{\AL}\|$ only depends on whether we can obtain a good 
  partition of $\bar{S}\cup S^*$. 
  Therefore, we can sample $m$ points from $\bar{S}\cup S^*$ instead of the whole dataset to perform 
  the clustering. 
  The performance of two-step kernel kmeans depends on the sampling rate; 
  we reduce the sampling rate from $m/n$ to $m/|S^* \cup \bar{S}|$. 
  As a result, the performance 
  significantly improves when $|S^* \cup \bar{S} |\ll n$.

In practice we do not know $S^*$ or $\bar{S}$ before solving the problem. 
However, both Theorem \ref{thm:support_vector} and experiments shown in Figure \ref{fig:svbylevel} suggest that  
we have a good guess of support vectors even at the bottom level. 
Therefore, we can use
the lower level support vectors
as a good guess of the upper level support vectors.
More specifically, 
after computing $\bar{\AL}^l$ from level $l$, we can use its support vector set $\bar{S}^l:=\{i\mid \bar{\alpha}_i^l > 0\}$ 
to run two-step kernel kmeans for finding the clusters at the $(l-1)$-th level.
Using this strategy, we obtain progressively better partitioning as we approach the original
problem at the top level.  

\begin{table*}
  \centering
  \caption{Comparing prediction methods using a lower level model. 
  Our proposed early prediction strategy is better in terms of prediction accuracy and testing 
  time per sample (time given in milliseconds).
  }
  \label{tab:prediction_method}
\resizebox{14cm}{!}{
  \begin{tabular}{|c|r|r|r|r|r|}
    \hline
   & \webspam $k=50$ & \webspam $k=100$ & \covtype $k=50$ & \covtype $k=100$ \\
    \hline
    Prediction by \eqref{eq:naive_prediction} & 92.6\% / 1.3ms   & 89.5\% / 
    1.3ms   & 94.6\% / 2.6ms & 92.7\% / 2.6ms \\
    BCM in \citep{Tresp} & 98.4\%  / 2.5ms  & 95.3\% / 3.3ms  & 91.5\% / 3.7ms  & 89.3\% / 5.6ms\\ 
    Early Prediction by \eqref{eq:predict_tildek} & {\bf 99.1\% / .17ms}  & 
    {\bf 99.0\% / .16ms }  &  {\bf 96.1\% / .4ms } & {\bf 96.0\% / .2ms 
    }\\
    \hline
  \end{tabular}
  }
\end{table*}

{\bf Early identification of support vectors. } 
We first run \LIBSVM to obtain the final set of support vectors, 
and then run DC-SVM with various numbers of clusters $4^5, 4^4, \dots, 4^0$ (corresponding
to level $5,4,\dots,1$ for multilevel DC-SVM). 
We show the precision and recall for the $\bar{\AL}$ at each level in identifying support vectors.
Figure \ref{fig:svbylevel} shows that DC-SVM can identify about 90\% support vectors
even using $256$ clusters. As discussed in Section \ref{sec:related}, Cascade SVM \citep{cascadeSVM} is another
way to identify support vectors. However, it is clear from Figure \ref{fig:svbylevel} that 
Cascade SVM cannot identify support vectors accurately as (1) it does not use kernel kmeans clustering, 
and (2) it cannot correct the false negative error made in lower levels.
Figure \ref{fig:rbf_p_comp}, \ref{fig:rbf_r_comp}, \ref{fig:poly_p_comp}, \ref{fig:poly_r_comp} further shows that DC-SVM identifies support vectors more quickly than the shrinking  strategy in \LIBSVM. 

{\bf Early prediction based on the $l$-th level solution. }
Computing the exact kernel SVM solution can be quite time consuming, 
  so it is important to obtain a good model using limited time and memory.
  We now propose a way to efficiently predict the label of unknown instances using 
  the lower-level models $\bar{\AL}^l$. We will see in the experiments
  that prediction using $\bar{\AL}^l$ from a lower level $l$ already can achieve near-optimal testing performance. 

  When the $l$-th level solution $\bar{\AL}^l$ is computed, 
  a single naive way to predict a new instance $\bx$'s label $\tilde{y}$ is:  
  \begin{equation}
    \tilde{y} = \text{sign}\left(\sum_{i=1}^n y_i \bar{\alpha}^l_i K(\bx,\bx_i)\right). 
    \label{eq:naive_prediction}
  \end{equation}
  Another way to combine the models trained from $k$ clusters is to use 
  the probabilistic framework proposed in the Bayesian Committee
  Machine (BCM) \citep{Tresp}. 
  However, as we show below, both these methods do not give good prediction accuracy when the number of clusters
  is large. 

  Instead, we propose the following early prediction strategy. 
  From Lemma \ref{thm:tildek}, $\bar{\AL}$ is the optimal solution to 
  the SVM dual problem \eqref{eq:dual_nobias} on the whole dataset with the
  approximated kernel $\bar{K}$ defined in \eqref{eq:tildek}. 
  Therefore, we propose to use the same kernel function $\bar{K}$ in the testing phase, 
  which leads to the prediction
\begin{equation}
   \sum_{c=1}^k \sum_{i\in \V_c} y_i \alpha_i \bar{K}(\bx_i, \bx) = 
   \sum_{i\in \V_{\pi(\bx)}} y_i \alpha_i K(\bx_i, \bx), 
   \label{eq:predict_tildek}
 \end{equation}
 where $\pi(\bx)$ can be computed by finding the 
 nearest cluster center. 
 Therefore, the testing procedure for early prediction is: (1) find the nearest cluster that 
 $x$ belongs to, 
 and then (2) use the model trained by data within that cluster to 
 compute the decision value. 
 
 We compare this method with prediction by \eqref{eq:naive_prediction} and 
 BCM in Table \ref{tab:prediction_method}. 
 The results show that our proposed testing scheme is better in terms of test accuracy. 
 We also compare average testing time per instance in Table \ref{tab:prediction_method}, 
 and our proposed method is much more efficient as we only evaluate
 $K(\bx,\bx_i)$ for all $\bx_i$ in the same cluster as $\bx$, thus reducing the testing 
 time from $O(|S|d)$ to $O(|S|d/k)$, where $S$ is the set of support vectors.

{\bf Refine solution before solving the whole problem. }
  Before training the final model at the top level using the whole dataset, 
  we can refine the initialization by solving the SVM problem induced by all support
  vectors at the first level, i.e., level below the final level. 
  As proved in Theorem \ref{thm:support_vector}, the support vectors of lower level models 
  are likely to be the support vectors of the whole model, so
 this will give a more accurate solution, and only requires us  
 to solve a problem with $O(|\bar{S}^{(1)}|)$ samples, where $\bar{S}^{(1)}$ is the 
 support vectors at the first level. 
%
Our final algorithm is given in Algorithm \ref{alg:dc}.

  \begin{algorithm}[tbh]
  \caption{Divide and Conquer SVM}
  \label{alg:dc}
  \SetKwInOut{Input}{Input}\SetKwInOut{Output}{Output}
  \Input{Training data $\{(\bx_i,y_i)\}_{i=1}^n$, balancing parameter $C$, kernel function. }

  \Output{The SVM dual solution $\AL$. }

  \For{$l= l^{\text{max}},\dots,1$}{
  Set number of clusters in the current level $k_l = k^l$\;
  \uIf{$l=l^{\text{max}}$}{
     Sample $m$ points $\{\bx_{i_1},\dots,\bx_{i_m}\}$ from the whole training set\;
     }
     \Else{
     Sample $m$ points $\{\bx_{i_1},\dots,\bx_{i_m}\}$ from $\{\bx_i\mid \bar{\alpha}^{(l+1)}_i > 0\}$\;
     }
     
  Run kernel kmeans on $\{\bx_{i_1},\dots,\bx_{i_m}\}$ to get cluster centers $\bc_1,\dots,\bc_{k^l}$\; 
  Obtain partition $\V_1,\dots,\V_{k^l}$ for all data points \;
  \For{$c=1,\dots,k^l$}  {
  Obtain $\bar{\AL}^{(l)}_{\V_c}$ by solving SVM for the data in the $c$-th 
  cluster $\V_c$ with $\bar{\AL}^{(l+1)}_{\V_c}$ as the initial point (
  $\bar{\AL}^{l_{\text{max}}+1}_{\V_c}$ is set to 0)\; 
  }
  }
  Refine solution: Compute $\AL^{(0)}$ by solving  SVM on $\{\bx_i \mid \alpha^{(1)}_i\neq 0\}$ using $\AL^{(1)}$ as the initial point\;
  Solve SVM on the whole data using $\AL^{(0)}$ as the initial point\;
\end{algorithm}

\section{Experimental Results}
\label{sec:experiment}

\begin{table}[inner sep=0pt]
  \centering
  \caption{Dataset statistics }
  \label{tab:datasets}
  \begin{tabular}{|c|r|r|r|} \hline
    \multirow{2}{*}{dataset}& Number of & Number of & \multirow{2}{*}{d} \\ 
    & training samples & testing samples &  \\
    \hline
    \ijcnn &49,990&91,701&22 \\
    \cifar &50,000 & 10,000 & 3072 \\
    \census & 159,619 & 39,904& 409 \\
    \covtype & 464,810 & 116,202 & 54 \\
    \webspam & 280,000 & 70,000 & 254 \\
\kddcup & 4,898,431 & 311,029 & 125 \\
\MNIST & 8,000,000 & 100,000 & 784 \\
    \hline
  \end{tabular}
\end{table}


We now compare our proposed algorithm with other 
SVM solvers.  
All the experiments are conducted 
on an Intel Xeon X5355 2.66GHz CPU with 8G RAM. 

{\bf Datasets: }
We use 7 benchmark datasets as shown in Table \ref{tab:datasets}\footnote{\cifar can be downloaded from \url{http://www.cs.toronto.edu/~kriz/cifar.html}; other datasets can be downloaded from \url{http://www.csie.ntu.edu.tw/~cjlin/libsvmtools/datasets} or the UCI data repository.}. 
We use the raw data without scaling for two image datasets \cifar and \MNIST, while features in all the other datasets 
are linearly scaled to $[0,1]$. \MNIST is a digital recognition dataset with 10 numbers, 
so we follow the procedure in \citep{ZK12a} to transform 
it into a binary classification problem by classifying round digits and non-round digits.  
Similarly, we transform \cifar into a binary classification problem
by classifying animals and non-animals. 
We use 
a random 80\%-20\% split for \covtype, \webspam, \kddcup, a random 8M/0.1M split for 
\MNIST (used in the original paper \citep{GL07b}), and the original training/testing split for \ijcnn and \cifar.
{\bf Competing Methods: }
We include the following exact SVM solvers (LIBSVM, CascadeSVM), approximate SVM solvers (SpSVM, LLSVM, FastFood, LTPU), and online SVM (LaSVM) in our comparison: 
\begin{compactenum}
\item LIBSVM: the implementation in the \LIBSVM library \citep{CC01a} with a small modification 
  to handle SVM without the bias term -- we observe that \LIBSVM has similar test accuracy with/without bias.
\item Cascade SVM: we implement cascade SVM \citep{cascadeSVM} using \LIBSVM as the base solver. 
\item SpSVM:
Greedy basis selection for nonlinear SVM \citep{SSK06a}. 
\item LLSVM: improved Nystr\"{o}m method for nonlinear SVM by \citep{ZW11a}. 
\item FastFood: use random Fourier features to approximate the kernel function \citep{fastfood}. 
  We solve the resulting linear SVM problem by the dual coordinate descent solver in \liblinear. 
  \item LTPU: Locally-Tuned Processing Units proposed in \citep{MJ89a}. 
    We set $\gamma$ equal to the 
    best parameter for Gaussian kernel SVM. The linear weights are obtained by 
    \liblinear.  
  \item LaSVM: An online algorithm proposed in \citep{AB05a}.
  \item DC-SVM: our proposed method for solving the exact SVM problem. 
  We use the modified \LIBSVM to solve subproblems. 
  \item DC-SVM (early): our proposed method with the early stopping approach
  described in Section \ref{sec:multilevel} to get the model before solving the entire kernel SVM optimization problem.
  \end{compactenum}
  \citep{ZK12a} reported that the low-rank approximation based method (LLSVM) outperforms 
  Core Vector Machines \citep{IWT05a} and the bundle method \citep{bmrm}, so we omit those comparisons here.
  Notice that we apply LIBSVM/LIBLINEAR as the default solver for DC-SVM, FastFood,  Cascade SVM, LLSVM and LTPU, 
  so the shrinking heuristic is automatically used in the experiments.  
\begin{table*}
  \centering
  \caption{
  Comparison on real datasets using the RBF kernel. }

\resizebox{15cm}{!}{
  \begin{tabular}{|c|rr|rr|rr|rr|}
    \hline
    &  \multicolumn{2}{|c}{\ijcnn} & \multicolumn{2}{|c}{\cifar} & \multicolumn{2}{|c}{\census} &  \multicolumn{2}{|c|}{\covtype
     } \\
    \cline{2-9} 
    & \multicolumn{2}{|c}{$C=32, \gamma=2$} & \multicolumn{2}{|c}{$C=8, \gamma=2^{-22}$} & \multicolumn{2}{|c}{$C=512, \gamma=2^{-9}$} & \multicolumn{2}{|c|}{$c=32, \gamma=32$} \\
    \hline
    & time(s) & acc(\%) & time(s) & acc(\%) & time(s) & acc(\%) & time(s) & acc(\%) \\
    \hline
    DC-SVM (early) & {\bf 12} & 98.35 & {\bf 1977} & 87.02 & {\bf 261} & {\bf 94.9} & {\bf 672} & 96.12 \\ 
    \hline
    DC-SVM & 41 & {\bf 98.69} & 16314 & {\bf 89.50} & 1051 & 94.2 & 11414 & {\bf 96.15}  \\
    \hline
    LIBSVM & 115 & {\bf 98.69} & 42688 & {\bf 89.50} & 2920 & 94.2 &  83631 & {\bf 96.15} \\    \hline
    LaSVM & 251 & 98.57 & 57204  & 88.19 & 3514 & 93.2 & 102603 & 94.39  \\ \hline
    CascadeSVM & 17.1 & 98.08 & 6148  & 86.8 & 849  & 93.0 & 5600 & 89.51 \\
    \hline
    LLSVM & 38 & 98.23 & 9745 & 86.5 &1212 & 92.8 & 4451 & 84.21 \\
    \hline
    FastFood & 87 & 95.95 & 3357 & 80.3 & 851 & 91.6 & 8550 & 80.1  \\
    \hline
    SpSVM & 20 & 94.92 & 21335 & 85.6& 3121 & 90.4 & 15113 & 83.37 \\
    \hline
    LTPU & 248 & 96.64 & 17418 & 85.3 & 1695 & 92.0 & 11532 & 83.25 \\
    \hline
  \end{tabular}
  }
  \label{tab:main_exp}
\end{table*}
\begin{table*}
  \centering
  \caption{
  Comparison on real datasets using the RBF kernel. }

\resizebox{12cm}{!}{
  \begin{tabular}{|c|rr|rr|rr|}
    \hline
      &  \multicolumn{2}{|c}{\webspam} & \multicolumn{2}{|c}{\kddcup}  & \multicolumn{2}{|c|}{\MNIST} \\
    \cline{2-7} 
    &  \multicolumn{2}{|c}{$C=8, \gamma=32$} & \multicolumn{2}{|c}{$C=256, \gamma=0.5$} & \multicolumn{2}{|c|}{$C=1, \gamma=2^{-21}$} \\
    \hline
  &  time(s) & acc(\%) & time(s) & acc(\%) & time(s) & acc(\%) \\
    \hline
    DC-SVM (early)  & {\bf 670} & 99.13 & {\bf 470} & {\bf 92.61} & {\bf 10287} & 99.85 \\ 
    \hline
    DC-SVM & 10485 & {\bf 99.28} & 2739 & 92.59 & 71823 & {\bf 99.93} \\
    \hline
    LIBSVM & 29472 & {\bf 99.28} & 6580 & 92.51 & 298900 & 99.91 \\    \hline
    LaSVM & 20342 & 99.25 & 6700 & 92.13 & 171400 & 98.95 \\ \hline
    CascadeSVM & 3515&98.1 & 1155 & 91.2 & 64151   & 98.3 \\
    \hline
    LLSVM  & 2853 & 97.74 & 3015 & 91.5 & 65121 & 97.64 \\
    \hline
    FastFood  & 5563 & 96.47 & 2191 & 91.6 & 14917& 96.5 \\
    \hline
    SpSVM & 6235 & 95.3 & 5124 & 90.5 &121563 & 96.3\\
    \hline
    LTPU  & 4005 & 96.12 & 5100 & 92.1 & 105210 & 97.82 \\
    \hline
  \end{tabular}
  }
  \label{tab:main_exp2}
\end{table*}

  {\bf Parameter Setting: }
 We first consider the RBF kernel 
$K(\bx_i,\bx_j) = \exp(-\gamma \|\bx_i-\bx_j\|_2^2)$. 
We chose the balancing parameter $C$ and kernel parameter $\gamma$ by 5-fold cross 
validation on a grid of points: $C=[2^{-10}, 2^{-9}, \dots, 2^{10}]$ and  
$\gamma = [2^{-10}, \dots, 2^{10}]$ for \ijcnn, \census, \covtype, \webspam, and \kddcup. 
The average distance between samples for un-scaled image datasets \MNIST and \cifar is much larger than other datasets, 
so we test them on smaller $\gamma$'s: $\gamma=[2^{-30}, 2^{-29}, \dots, 2^{-10}]$. 
Regarding the parameters for DC-SVM, we use 5 levels ($l^{\text{max}}=4$) and 
$k=4$, so the five levels have $1, 4, 16, 64$ and $256$ clusters respectively. 
For DC-SVM (early), we stop at the level with 64 clusters.
The following are parameter settings for other methods in Table \ref{tab:main_exp}:
the rank is set to be 3000 in LLSVM; 
number of Fourier features is 3000 in Fastfood\footnote{In Fastfood we control the number of blocks so that number of Fourier features is close to 3000 for each dataset. }; 
number of clusters is 3000 in LTPU; 
number of basis vectors is 200 in SpSVM; the tolerance in the stopping condition for LIBSVM and DC-SVM is set to 
$10^{-3}$ (the default setting of \LIBSVM); for LaSVM we set the number of passes to be 1; 
for CascadeSVM we output the results after the first round. 
\begin{figure*}
  \begin{center}
\begin{tabular}{ccc}
  \subfloat[\webspam objective function \label{fig:rbf_webspam_obj}]{\includegraphics[width=0.33\textwidth]{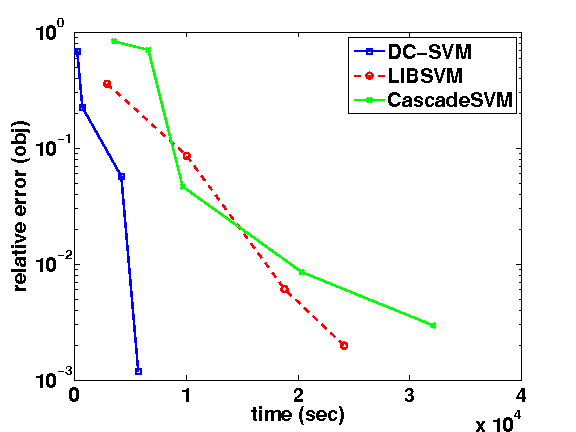}}
&
\subfloat[\covtype objective function \label{fig:rbf_covtype_obj}]{\includegraphics[width=0.33\textwidth]{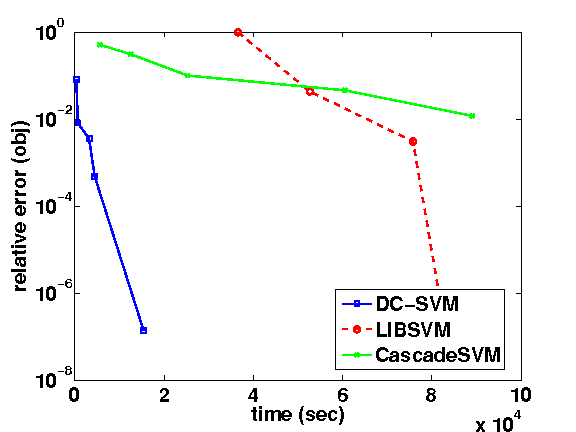}} & 
\subfloat[\MNIST objective function \label{fig:rbf_mnist8m_obj}]{\includegraphics[width=0.33\textwidth]{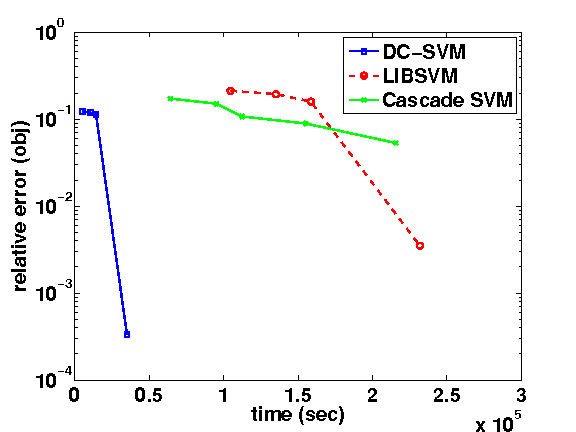}} \\
 \subfloat[\webspam testing accuracy \label{fig:rbf_webspam_acc}]{\includegraphics[width=0.33\textwidth]{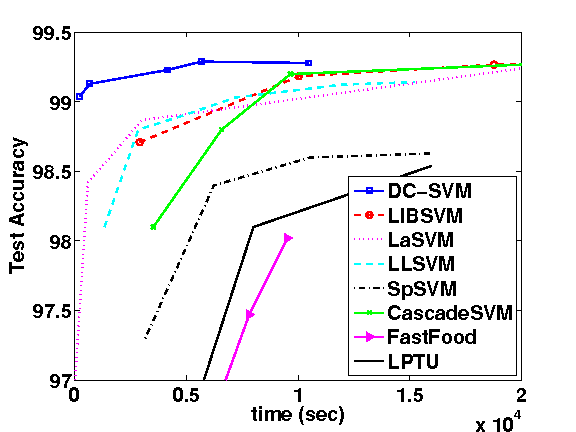}} &
\subfloat[\covtype testing accuracy \label{fig:rbf_covtype_acc}]{\includegraphics[width=0.33\textwidth]{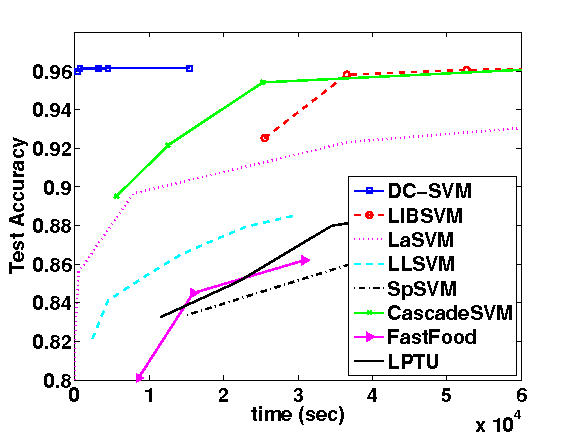}} &
\subfloat[\MNIST testing accuracy \label{fig:rbf_mnist8m_acc}]{\includegraphics[width=0.33\textwidth]{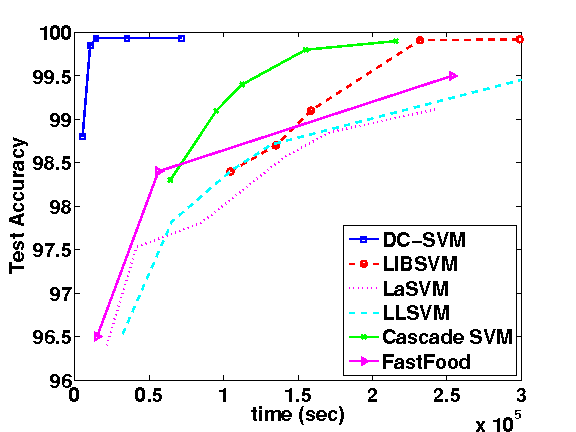}} 
\\
\subfloat[\kddcup objective function \label{fig:rbf_kddcup_obj}]{\includegraphics[width=0.33\textwidth]{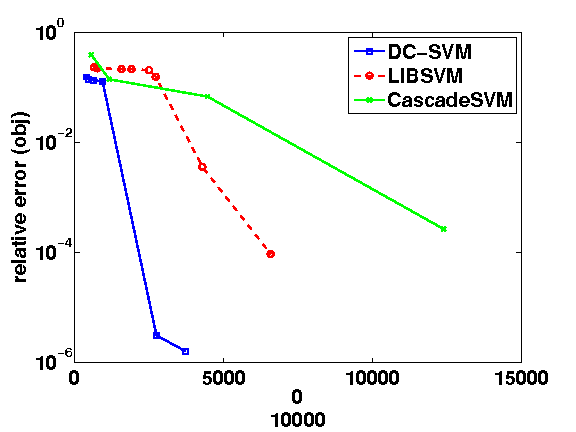}}
&
\subfloat[\cifar objective function \label{fig:rbf_cifar_obj}]{\includegraphics[width=0.33\textwidth]{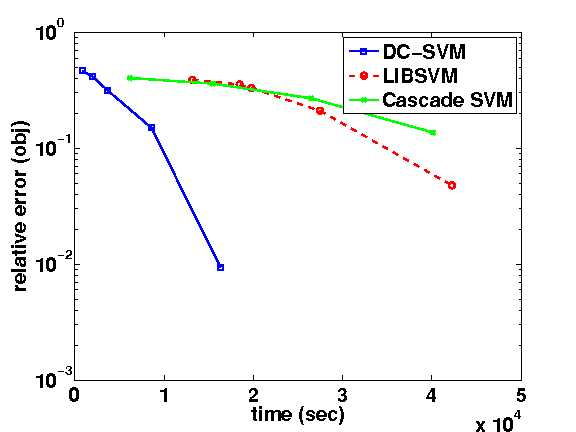}} 
& 
\\
\subfloat[\kddcup testing accuracy \label{fig:rbf_kddcup_acc}]{\includegraphics[width=0.33\textwidth]{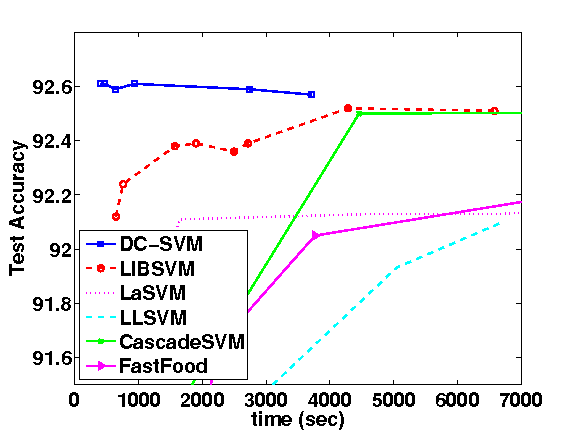}}
&
\subfloat[\cifar testing accuracy \label{fig:rbf_cifar_acc}]{\includegraphics[width=0.33\textwidth]{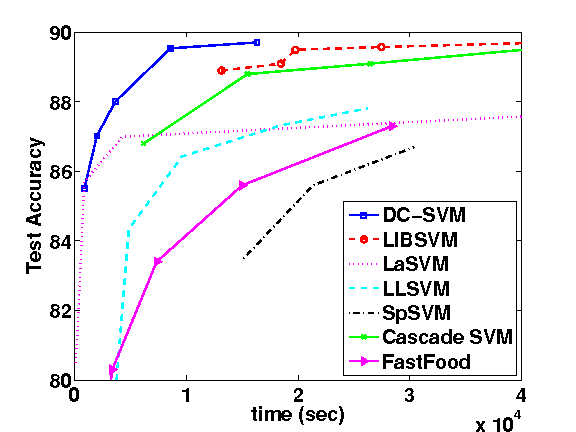}} & 
\\
\end{tabular}
\end{center}
\caption{ Comparison of algorithms using the RBF kernel. 
Each point for DC-SVM indicates the result when stopping at different levels; 
each point for \LIBSVM and CascadeSVM indicates different stopping conditions; each point for LaSVM indicates various number of passes
through data points; each point for LTPU and LLSVM, and FastFood indicates different sample sizes; and each point for SpSVM indicates
different number of basis vectors. Methods with
testing performance below the bottom of the y-axis are not shown in the figures. 
}
\label{fig:exp_time_rbf}
\end{figure*}
%

{\bf Experimental Results with RBF kernel: }
Table \ref{tab:main_exp} presents time taken and test accuracies. 
Experimental results show that the early prediction
approach in DC-SVM achieves near-optimal test performance. By going to the
top level (handling the whole problem), DC-SVM achieves better test performance
but needs more time. 
Table \ref{tab:main_exp} only gives the comparison on {\it one} setting; 
it is natural to ask, for example, about the performance of LIBSVM with a looser stopping condition, 
or Fastfood with varied number of Fourier features. Therefore, for each algorithm we change
the parameter settings and present more detailed results in Figure \ref{fig:exp_time_rbf}. 

 
Figure \ref{fig:exp_time_rbf} shows convergence results with time -- in \ref{fig:rbf_webspam_obj}, \ref{fig:rbf_covtype_obj}, \ref{fig:rbf_mnist8m_obj}  
the relative error on the y-axis is defined as 
$(f(\AL)-f(\AL^*))/f(\AL^*)$, where $\AL^*$ is computed by running LIBSVM with $10^{-8}$ accuracy. 
Online and approximate solvers are not included in this comparison as they do not solve the
exact kernel SVM problem. 
We observe that DC-SVM achieves faster convergence in objective function 
compared with the 
state-of-the-art exact SVM solvers. 
Moreover, DC-SVM is 
also able to  
achieve superior test accuracy in lesser  training time as 
compared with approximate solvers. Figure \ref{fig:rbf_webspam_acc}, 
\ref{fig:rbf_covtype_acc}, \ref{fig:rbf_mnist8m_acc} 
compare the efficiency in achieving different testing accuracies.
We can see that DC-SVM consistently achieves more than 50 fold speedup
while achieving higher testing accuracy.  

{\bf Experimental Results with varying values of $C, \gamma$: }
As shown in Theorem \ref{thm:approx} the quality of approximation depends on $D(\pi)$, which is strongly related
to the kernel parameters. In the RBF kernel, when $\gamma$ is large, a large portion of kernel entries will be close
to 0, and $D(\pi)$ will be small so that $\bar{\alpha}$ is a good initial point for the top level. 
On the other hand, when $\gamma$ is small, $\bar{\alpha}$ may not be close to the optimal solution. 
To test the performance of DC-SVM under different parameters, 
we conduct the comparison on a wide range of parameters 
($C=[2^{-10}, 2^{-6}, 2^{1}, 2^6, 2^{10}], \gamma=[2^{-10},2^{-6}, 2^1, 2^6,  2^{10}]$). The results on the \ijcnn, \covtype, \webspam and \census
datasets are shown in Tables \ref{tab:c_gamma_ijcnn}, \ref{tab:c_gamma_webspam},  
\ref{tab:c_gamma_covtype}, \ref{tab:c_gamma_census} (in the appendix).
We observe that even when $\gamma$ is small, DC-SVM is still 1-2 times faster than LIBSVM: among all the 100 settings, 
DC-SVM is faster on 96/100 settings. The reason is that even when $\bar{\alpha}$ is not close to $\alpha$, 
using $\bar{\alpha}$ as the initial point is still better than initialization with a random or zero vector.   
On the other hand, DC-SVM (early) is extremely fast, and
achieves almost the same or even better accuracy when $\gamma$ is small (as it uses an approximated kernel). 
In Figure \ref{fig:parameter_plot_ijcnn}, \ref{fig:parameter_plot_webspam}, \ref{fig:parameter_plot_covtype}, \ref{fig:parameter_plot_census} (in appendix) we plot the performance of DC-SVM and LIBSVM under various $C$ and $\gamma$ values,
the results indicate that DC-SVM (early) is more robust to parameters.  
The accumulated runtimes are shown in Table \ref{tab:grid_sum}.

{\bf Experimental Results with the polynomial kernel: }
To show that DC-SVM is efficient for different types of kernels, we further 
conduct experiments on \covtype and \webspam datasets for the degree-3 polynomial 
kernel $K(\bx_i,\bx_j)=(\eta + \gamma \bx_i^T \bx_j)^3$. 
For the polynomial kernel, 
the parameters chosen by cross validation are $C=2, \gamma=1$ for \covtype, and $C=8, \gamma=16$ for \webspam. 
We set $\eta=0$, which is the default setting in \LIBSVM. 
Figures \ref{fig:poly_web_obj} and \ref{fig:poly_cov_obj} compare the 
training speed of DC-SVM and \LIBSVM 
for reducing the objective function value while Figures \ref{fig:poly_web_acc} 
and \ref{fig:poly_cov_acc} show the testing accuracy compared with \LIBSVM and LaSVM. 
Since LLSVM, FastFood and LPTU are developed for shift-invariant kernels, we do not include them in our 
comparison. We can see that when using the polynomial kernel, our algorithm is more than 100 times faster than \LIBSVM and LaSVM. One main reason for such large improvement is that it is hard for \LIBSVM and LaSVM to identify the right
set of support vectors when using the polynomial kernel. As shown in Figure \ref{fig:svbylevel}, \LIBSVM cannot 
even identify 20\% of the support vectors in $10^5$ seconds, while DC-SVM has a 
very good guess of the support vectors even at the bottom level, 
where number of clusters is 256.
  \begin{figure*}
  \begin{center}
    \vspace{-20pt}
\begin{tabular}{cc}
  \subfloat[\webspam objective function \label{fig:poly_web_obj}]{\includegraphics[width=0.4\textwidth]{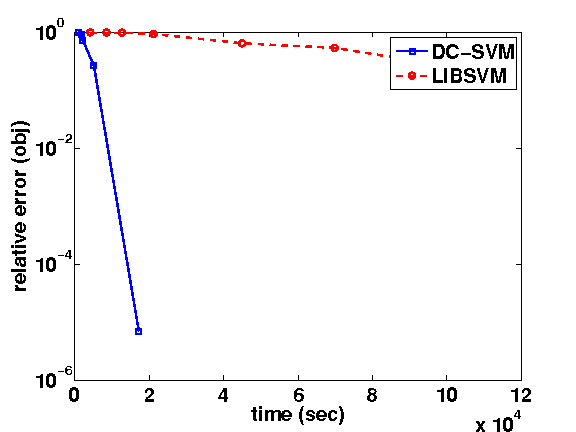}} 
  & 
\subfloat[\webspam testing accuracy \label{fig:poly_web_acc}
]{\includegraphics[width=0.4\textwidth]{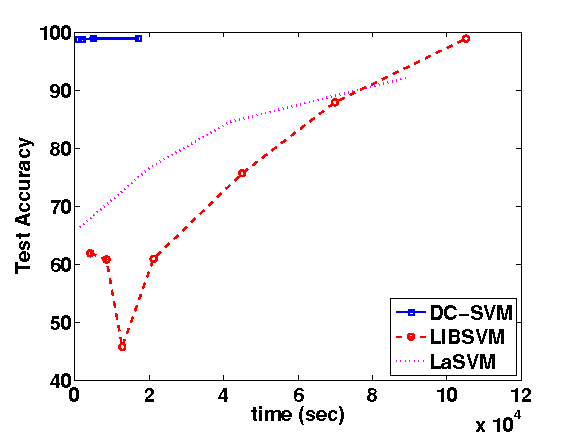}}\vspace{-10pt} \\
\subfloat[\covtype objective function\label{fig:poly_cov_obj}]{\includegraphics[width=0.4\textwidth]{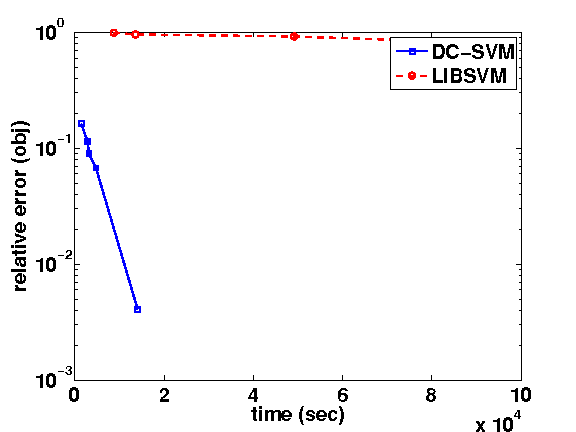}}
& 
\subfloat[\covtype testing accuracy\label{fig:poly_cov_acc}]{\includegraphics[width=0.4\textwidth]{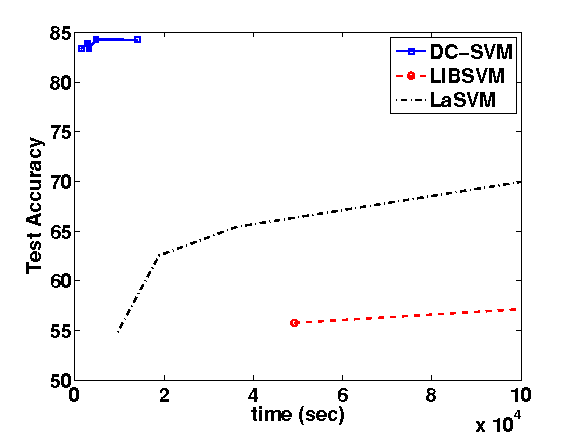}} 
\vspace{-10pt}
\end{tabular}
\end{center}
\caption{ Comparison of algorithms on real datasets using the polynomial kernel.  
}
\vspace{-5pt}
\label{fig:exp_time_poly}
\end{figure*}

{\bf Clustering time vs Training time. }
Our DC-SVM algorithm is composed of two important parts: clustering and SVM training. 
In Table \ref{tab:runtime} we list the time taken by each part; 
we can see that the clustering time is almost constant at each level, while the rest of the training
time keeps increasing.   
\begin{table}[inner sep=0pt]
  \centering
  \caption{Total time for DC-SVM, DC-SVM (early) and \LIBSVM on the grid of parameters $C, \gamma$ shown in Tables \ref{tab:c_gamma_ijcnn}, \ref{tab:c_gamma_webspam}, \ref{tab:c_gamma_covtype}, and \ref{tab:c_gamma_census} (in the Appendix). }
  \vspace{-5pt}
  \label{tab:grid_sum}
  \begin{tabular}{|c|r|r|r|}
    \hline
    \multirow{2}{*}{dataset} & DC-SVM  & \multirow{2}{*}{DC-SVM}  & \multirow{2}{*}{\LIBSVM} \\ 
    & (early) &  &  \\
    \hline
    \ijcnn & 16.4 mins & 2.3 hours & 6.4 hours \\ 
    \hline
    \webspam & 5.6 hours & 4.3 days & 14.3 days  \\
    \hline
    \covtype & 10.3 hours & 4.8 days & 36.7 days  \\
    \hline
    \census & 1.5 hours & 1.4 days & 5.3 days \\
    \hline
  \end{tabular}
  \vspace{-5pt}
\end{table}

\begin{table}[h]
  \centering
  \caption{Run time (in seconds) for DC-SVM on different levels (\covtype dataset). We can see the clustering time is only a small portion compared with the total training time. \label{tab:runtime}}
  \begin{tabular}{|r|r|r|r|r|r|}
    \hline
  Level & 4 & 3 & 2 & 1 & 0 \\
  \hline
  Clustering & 43.2s & 42.5s & 40.8s & 38.1s & 36.5s \\
  \hline
  Training &  159.4s & 439.7s & 1422.8s & 3135.5s & 7614.0s \\
  \hline
  \end{tabular}
\end{table}



      \section{Conclusions}
      \label{sec:conclusion}
      In this paper, we have proposed a novel divide and conquer algorithm for solving
      kernel SVMs (DC-SVM). Our algorithm divides the problem into smaller subproblems
      that can be solved independently and efficiently. We 
      show that the subproblem solutions are close to that of the original 
      problem, which motivates us to ``glue'' solutions from subproblems in order to 
      efficiently solve the original kernel SVM problem. 
      Using this, we also incorporate an early prediction strategy into our 
      algorithm. 
      We report extensive experiments to demonstrate that DC-SVM significantly outperforms 
      state-of-the-art exact 
      and approximate solvers for nonlinear kernel SVM on large-scale datasets. The code for DC-SVM
      is available at \url{http://www.cs.utexas.edu/~cjhsieh/dcsvm}. 

\bibliography{cluster_svm}
\bibliographystyle{plainnat}
\newpage
\begin{table*}[inner sep=0pt]
  \vspace{-20pt}
  \centering
  \caption{Comparison of DC-SVM, DC-SVM (early), and LIBSVM on \ijcnn with various parameters $C, \gamma$. DC-SVM (early) is always 10 times faster than LIBSVM and achieves similar test accuracy. DC-SVM is faster than LIBSVM for almost every setting.  }
  \label{tab:c_gamma_ijcnn}
\resizebox{16cm}{!}{
  \begin{tabular}{|c|r|r|rr|rr|rr|rr|}
    \hline
    \multirow{2}{*}{dataset} & \multirow{2}{*}{$C$} & \multirow{2}{*}{$\gamma$} & \multicolumn{2}{|c}{DC-SVM (early)} & 
    \multicolumn{2}{|c}{DC-SVM}  & \multicolumn{2}{|c}{\LIBSVM} & 
    \multicolumn{2}{|c|}{LaSVM}\\ 
    \cline{4-11}
  & & & acc(\%) & time(s) & acc(\%) & time(s) & acc(\%) & time(s) & acc(\%) & 
   time(s) \\
    \hline
    \ijcnn & $2^{-10}$ & $2^{-10}$ & {\bf 90.5}  & {\bf 12.8}  & {\bf 90.5}  & 120.1   & {\bf 90.5}  & 130.0 & {\bf 90.5}  & 492 \\
    \ijcnn & $2^{-10}$ & $2^{-6}$ & {\bf 90.5} & {\bf 12.8} & {\bf 90.5} & 203.1  & {\bf 90.5} & 492.5 & {\bf 90.5} & 526 \\
    \ijcnn & $2^{-10}$ & $2^{1}$ & {\bf 90.5}  & {\bf 50.4} & {\bf 90.5} &524.2   & {\bf 90.5}& 1121.3 & {\bf 90.5} & 610 \\
    \ijcnn & $2^{-10}$ & $2^{6}$ & {\bf 93.7} & {\bf 44.0} & {\bf  93.7}& 400.2  & {\bf 93.7}  & 1706.5 & 92.4 & 1139  \\
    \ijcnn & $2^{-10}$ & $2^{10}$ & {\bf 97.1}  & {\bf 39.1} & {\bf 97.1}  & 451.3  & {\bf 97.1}  & 1214.7 & 95.7  & 1711  \\
    \ijcnn & $2^{-6}$ & $2^{-10}$ & {\bf 90.5}  & {\bf 7.2}  & {\bf 90.5}  & 84.7  &{\bf 90.5}  & 252.7 & {\bf 90.5} &531  \\
    \ijcnn & $2^{-6}$ & $2^{-6}$ & {\bf 90.5} & {\bf 7.6} & {\bf 90.5}  & 161.2 &{\bf 90.5}  & 401.0 & {\bf 90.5} & 519 \\
    \ijcnn & $2^{-6}$ & $2^{1}$ & 90.7  & {\bf 10.8} & {\bf 90.8} & 183.6 & {\bf 90.8} & 553.2 & 90.5 & 577 \\
    \ijcnn & $2^{-6}$ & $2^{6}$ & {\bf 93.9} & {\bf 49.2} & {\bf 93.9} & 416.1 & {\bf 93.9}  & 1645.3 & 91.3 & 1213 \\
    \ijcnn & $2^{-6}$ & $2^{10}$ & {\bf 97.1}  & {\bf 40.6} & {\bf 97.1}  & 477.3 & {\bf 97.1} & 1100.7 & 95.5 & 1744 \\
    \ijcnn & $2^{1}$ & $2^{-10}$ & {\bf 90.5}  & {\bf 14.0}  & {\bf 90.5} & 305.6   & {\bf 90.5}  & 424.9 & {\bf 90.5} &511 \\
    \ijcnn & $2^{1}$ & $2^{-6}$ & 91.8 & {\bf 12.6} & {\bf 92.0} & 254.6  & {\bf 92.0} & 367.1 & 90.8 & 489  \\
    \ijcnn & $2^{1}$ & $2^{1}$ & {\bf 98.8}  & {\bf 7.0} & {\bf 98.8} & 43.5  & {\bf 98.8}  & 111.6 & 95.4 & 227  \\
    \ijcnn & $2^{1}$ & $2^{6}$ & {\bf 98.3} & {\bf 34.6} & {\bf 98.3} & 584.5  & {\bf 98.3} & 1776.5 & 97.8 & 1085  \\
    \ijcnn & $2^{1}$ & $2^{10}$ & {\bf 97.2}  & {\bf 94.0} & {\bf 97.2}  & 523.1 & {\bf 97.2} & 1955.0 & 96.1 & 1691 \\
    \ijcnn & $2^{6}$ & $2^{-10}$ & {\bf 92.5}  & {\bf 27.8}  & 91.9 & 276.3  & 91.9  & 331.8 & 90.5  & 442  \\
    \ijcnn & $2^{6}$ & $2^{-6}$ & 94.8 & {\bf 19.9} & {\bf 95.6} & 313.7  & {\bf 95.6}  & 219.5 & 92.3 & 435 \\
    \ijcnn & $2^{6}$ & $2^{1}$ & {\bf 98.3}  & {\bf 6.4} & {\bf 98.3} & 75.3  & {\bf 98.3}  & 59.8 & 97.5 & 222  \\
    \ijcnn & $2^{6}$ & $2^{6}$ & {\bf 98.1} & {\bf 48.3} & {\bf 98.1}  & 384.5  & {\bf 98.1} & 987.7 & 97.1 & 1144 \\
    \ijcnn & $2^{6}$ & $2^{10}$ & {\bf 97.2}  & {\bf 51.9} & {\bf 97.2}  & 530.7  & {\bf 97.2}  & 1340.9 & 95.4 &1022 \\
    \ijcnn & $2^{10}$ & $2^{-10}$ & {\bf 94.4}  & {\bf 146.5}  & 92.5 & 606.1  & 92.5  & 1586.6 & 91.7 & 401  \\
    \ijcnn & $2^{10}$ & $2^{-6}$ & 97.3 & {\bf 124.3} & {\bf 97.6} & 553.6  & {\bf 97.6} & 1152.2 & 96.5 &1075  \\
    \ijcnn & $2^{10}$ & $2^{1}$ & 97.5 & {\bf 10.6} &  {\bf 97.5}& 50.8  & {\bf 97.5}  & 139.3 & 97.1 & 605 \\
    \ijcnn & $2^{10}$ & $2^{6}$ & {\bf 98.2} & {\bf 42.5} & {\bf 98.2} & 338.3  & {\bf 98.2}  & 1629.3 & 97.1 & 890  \\
    \ijcnn & $2^{10}$ & $2^{10}$ & {\bf 97.2}  & {\bf 66.4} & {\bf 97.2}  & 309.6  & {\bf 97.2} & 2398.3 & 95.4 & 909 \\
    \hline
  \end{tabular}
  }
  \vspace{-10pt}
\end{table*}

\begin{figure*}
  \begin{center}
\begin{tabular}{ccc}
  \vspace{-10pt}
  \subfloat[\ijcnn $C=2^{-10}$]{\includegraphics[width=0.28\textwidth]{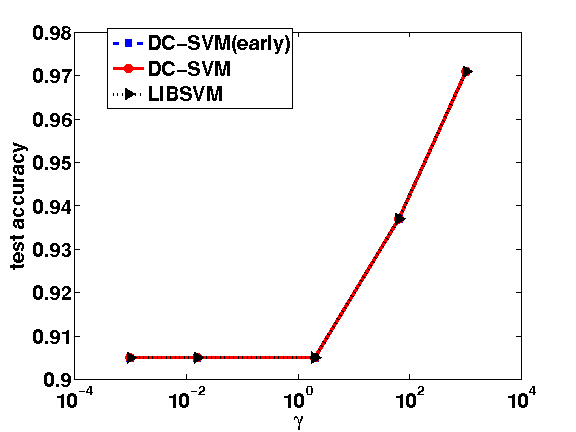}}
&
\subfloat[\ijcnn $C = 2^{1}$]{\includegraphics[width=0.28\textwidth]{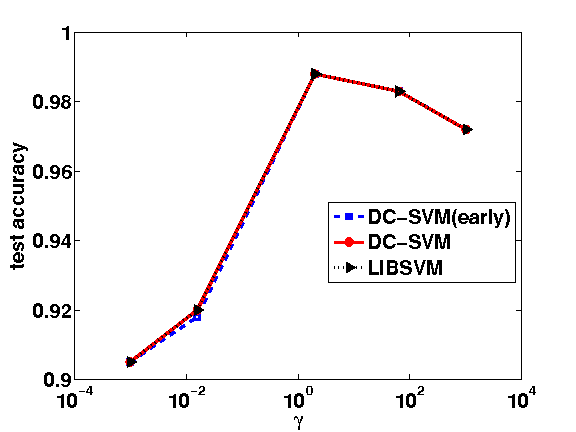}} & 
\subfloat[\ijcnn $C = 2^{10}$]{\includegraphics[width=0.28\textwidth]{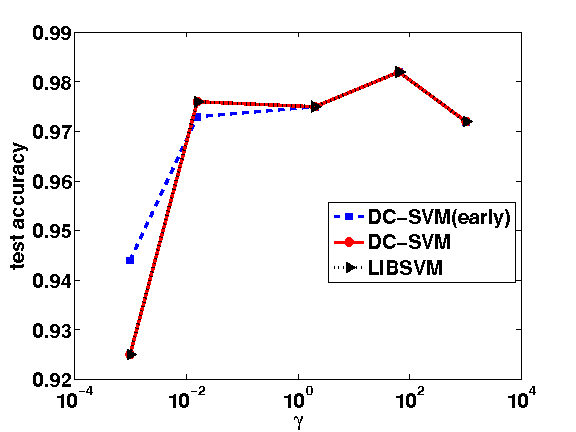}} \\\
\subfloat[\ijcnn $\gamma=2^{-10}$]{\includegraphics[width=0.28\textwidth]{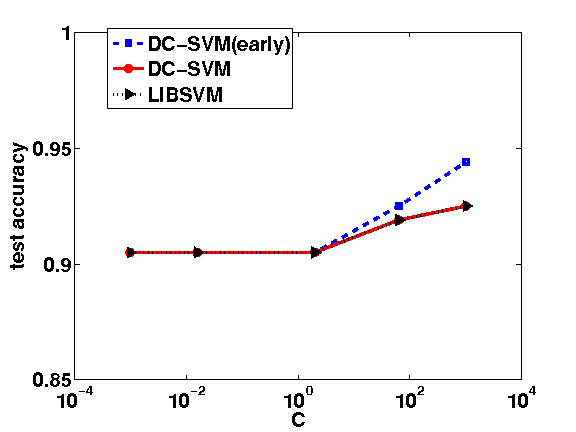}} &
\subfloat[\ijcnn $\gamma=2^{1}$]{\includegraphics[width=0.28\textwidth]{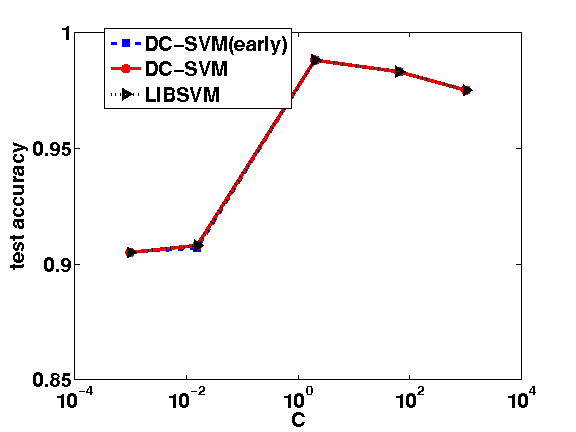}} &
\subfloat[\ijcnn $\gamma=2^{10}$]{\includegraphics[width=0.28\textwidth]{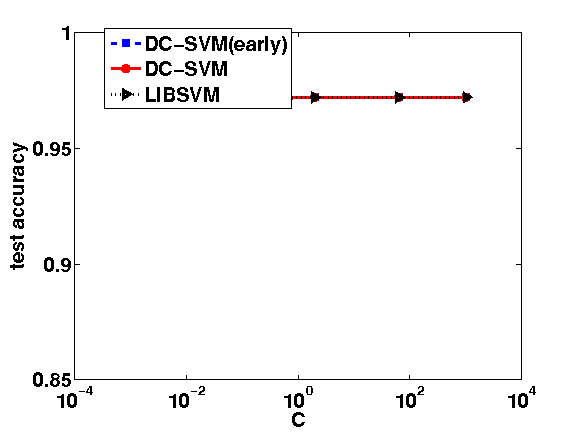}} 
\\
\end{tabular}
\end{center}
\caption{Robustness to the parameters $C, \gamma$ on \ijcnn dataset. \label{fig:parameter_plot_ijcnn}}
\vspace{-10pt}
\end{figure*}
\newpage
\begin{table*}[inner sep=0pt]
  \vspace{-20pt}
  \centering
  \caption{Comparison of DC-SVM, DC-SVM (early) and  LIBSVM on \webspam with various parameters $C, \gamma$. DC-SVM (early) is always more than 30 times faster than LIBSVM and has comparable or better test accuracy; DC-SVM is faster than LIBSVM under all settings.  }
  \vspace{-10pt}
  \label{tab:c_gamma_webspam}
\resizebox{12cm}{!}{
  \begin{tabular}{|c|r|r|rr|rr|rr|}
    \hline
    \multirow{2}{*}{dataset} & \multirow{2}{*}{$C$} & \multirow{2}{*}{$\gamma$} & \multicolumn{2}{|c}{DC-SVM (early)} & 
    \multicolumn{2}{|c}{DC-SVM}  & \multicolumn{2}{|c|}{\LIBSVM} \\ 
    \cline{4-9}
  & & & acc(\%) & time(s) & acc(\%) & time(s) & acc(\%) & time(s) \\
    \hline
    \webspam & $2^{-10}$ & $2^{-10}$ & {\bf 86} & {\bf 806}  & 61 &  26324 & 61  &  45984    \\
    \webspam & $2^{-10}$ & $2^{-6}$ & {\bf 83} & {\bf 935}   & 61 & 22569 &  61  &  53569    \\
    \webspam & $2^{-10}$ & $2^{1}$ & 87.1 & {\bf 886}  & {\bf 91.1} & 10835 & {\bf 91.1}   & 34226    \\
    \webspam & $2^{-10}$ & $2^{6}$ & {\bf 93.7} & {\bf 1060} &  92.6 & 6496  & 92.6  & 34558   \\
    \webspam & $2^{-10}$ & $2^{10}$ & 98.3  & {\bf 1898} & {\bf 98.5}  & 7410  & {\bf 98.5}  & 55574   \\
    \webspam & $2^{-6}$ & $2^{-10}$ & {\bf 83}  & {\bf 793}  & 68  & 24542  & 68  & 44153    \\
    \webspam  & $2^{-6}$ & $2^{-6}$ & {\bf 84} & {\bf 762} & 69  & 33498 & 69  & 63891    \\
    \webspam & $2^{-6}$ & $2^{1}$ & 93.3  & {\bf 599} & {\bf 93.5} & 15098 & 93.1 & 34226   \\
    \webspam  & $2^{-6}$ & $2^{6}$ & {\bf 96.4} & {\bf 704} & {\bf 96.4} & 7048 & {\bf 96.4}  & 48571   \\
    \webspam   & $2^{-6}$ & $2^{10}$ & 98.3  & {\bf 1277} & {\bf 98.6}  &6140  & {\bf 98.6} & 45122    \\
    \webspam  & $2^{1}$ & $2^{-10}$ & {\bf 87}  & {\bf 688}  & 78 & 18741   & 78  & 48512  \\
    \webspam  & $2^{1}$ & $2^{-6}$ & {\bf 93} & {\bf 645} & 81 & 10481  & 81 & 30106   \\
    \webspam  & $2^{1}$ & $2^{1}$ & 98.4  & {\bf 420} & {\bf 99.0} & 9157  & {\bf 99.0}  & 35151  \\
    \webspam  & $2^{1}$ & $2^{6}$ & {\bf 98.9} & {\bf 466} & {\bf 98.9} & 5104  & {\bf 98.9} & 28415  \\
    \webspam  & $2^{1}$ & $2^{10}$ & 98.3  & {\bf 853} & {\bf 98.7} & 4490  & 98.7 & 28891  \\
    \webspam   & $2^{6}$ & $2^{-10}$ & {\bf 93}  & {\bf 759}  & 80 & 24849  & 80  & 64121  \\
    \webspam  & $2^{6}$ & $2^{-6}$ & {\bf 97} & {\bf 602} & 83 & 21898  & 83  & 55414  \\
    \webspam  & $2^{6}$ & $2^{1}$ & 98.8  & {\bf 406} & {\bf 99.1} & 8051  & {\bf 99.1}  & 40510 \\
    \webspam  & $2^{6}$ & $2^{6}$ & {\bf 99.0} & {\bf 465} & 98.9  & 6140  & 98.9 & 35510  \\
    \webspam  & $2^{6}$ & $2^{10}$ & 98.3  & {\bf 917} & {\bf 98.7}  & 4510  & {\bf 98.7}  & 34121  \\
    \webspam  & $2^{10}$ & $2^{-10}$ & {\bf 97}  & {\bf 1350}  & 82 & 31387  & 82  & 81592  \\
    \webspam  & $2^{10}$ & $2^{-6}$ & {\bf 98} & {\bf 1127} & 86 & 34432  & 86 & 82581 \\
    \webspam  & $2^{10}$ & $2^{1}$ & {\bf 98.8} & {\bf 463} &  {\bf 98.8}& 10433  & {\bf 98.8}  & 58512 \\
    \webspam  & $2^{10}$ & $2^{6}$ & {\bf 99.0} & {\bf 455} & {\bf 99.0} & 15037  & {\bf 99.0}  & 75121 \\
    \webspam   & $2^{10}$ & $2^{10}$ & 98.3  & {\bf 831} & {\bf 98.7}  & 7150  & {\bf 98.7} & 59126 \\
    \hline
  \end{tabular}
  }
  \vspace{-10pt}
\end{table*}

\begin{figure*}
  \vspace{-10pt}
  \begin{center}
\begin{tabular}{ccc}
  \subfloat[\covtype $C=2^{-10}$]{\includegraphics[width=0.31\textwidth]{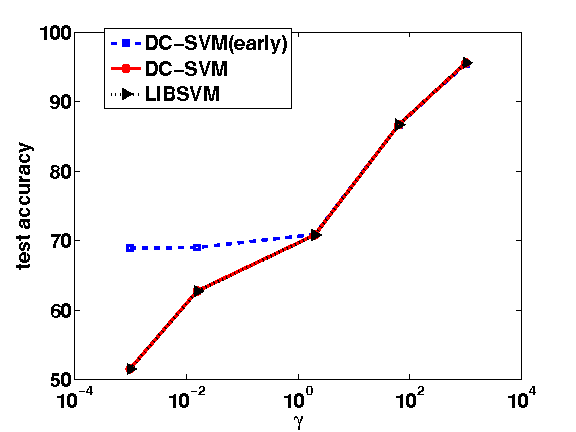}}
&
\subfloat[\covtype $C = 2^{1}$]{\includegraphics[width=0.31\textwidth]{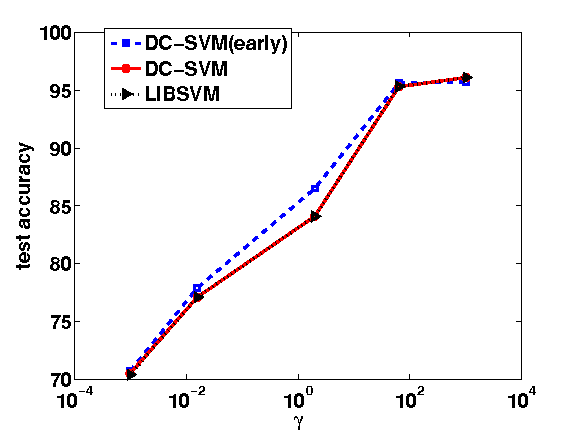}} & 
\subfloat[\covtype $C = 2^{10}$]{\includegraphics[width=0.31\textwidth]{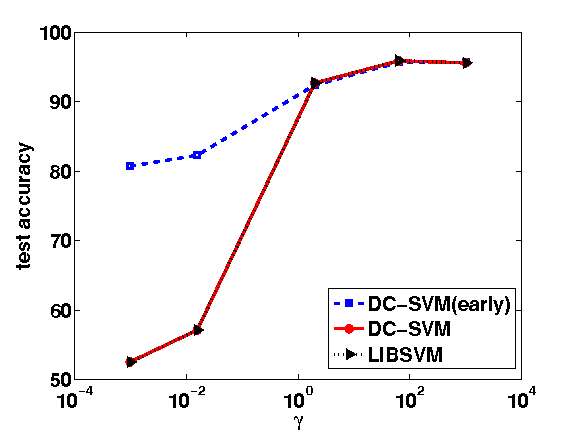}} \\
\subfloat[\covtype $\gamma=2^{-10}$]{\includegraphics[width=0.31\textwidth]{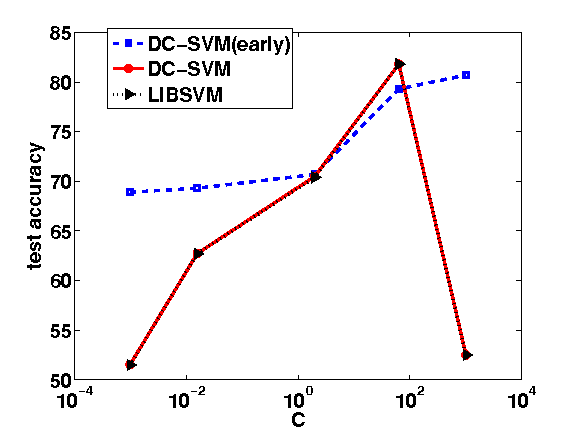}} &
\subfloat[\covtype $\gamma=2^{1}$]{\includegraphics[width=0.31\textwidth]{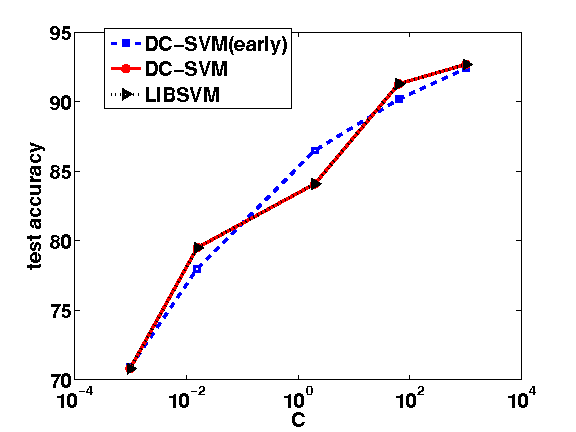}} &
\subfloat[\covtype $\gamma=2^{10}$]{\includegraphics[width=0.31\textwidth]{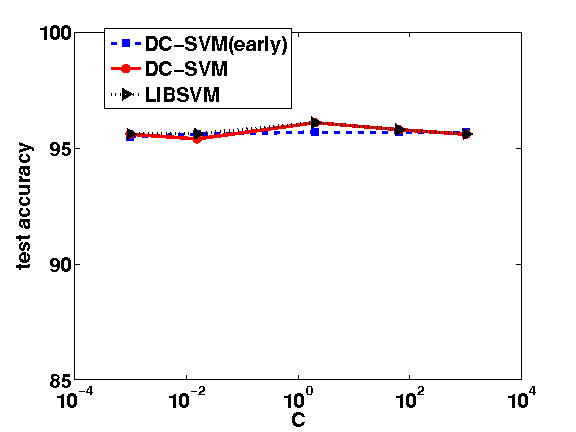}} 
\\
\end{tabular}
\end{center}
\caption{ Robustness to the parameters $C, \gamma$ on \covtype dataset. \label{fig:parameter_plot_covtype} }
\vspace{-10pt}
\end{figure*}
\newpage

\begin{table*}[inner sep=0pt]
  \vspace{-20pt}
  \centering
  \caption{Comparison of DC-SVM, DC-SVM (early) and LIBSVM on \covtype with various parameters $C, \gamma$. DC-SVM (early) is always more than 50 times faster than LIBSVM with similar test accuracy; DC-SVM is faster than LIBSVM under all settings.  }
  \label{tab:c_gamma_covtype}
  \vspace{-10pt}
\resizebox{12cm}{!}{
  \begin{tabular}{|c|r|r|r|r|r|r|r|r|r|r|}
    \hline
    \multirow{2}{*}{dataset} & \multirow{2}{*}{$C$} & \multirow{2}{*}{$\gamma$} & \multicolumn{2}{|c}{DC-SVM (early)} & 
    \multicolumn{2}{|c}{DC-SVM}  & \multicolumn{2}{|c|}{\LIBSVM} \\ 
    \cline{4-9}
  & & & acc(\%) & time(s) & acc(\%) & time(s) & acc(\%) & time(s) \\
    \hline
    \covtype & $2^{-10}$ & $2^{-10}$ & {\bf 68.9} & {\bf 736}  & 51.5 &  24791 & 51.5  & 48858     \\
    \covtype   & $2^{-10}$ & $2^{-6}$ & {\bf 69.0} & {\bf 507}   & 62.7 & 17189  &  62.7 & 62668     \\
    \covtype & $2^{-10}$ & $2^{1}$ & {\bf 70.9} & {\bf 624}  & 70.8 & 12997 & 70.8   & 88160    \\
    \covtype  & $2^{-10}$ & $2^{6}$ & {\bf 86.7} & {\bf 1351} &  {\bf 86.7} & 13985  & {\bf 86.7}  & 85111   \\
    \covtype & $2^{-10}$ & $2^{10}$ & 95.5 & {\bf 1173} & {\bf 95.6}  & 9480  & {\bf 95.6}  & 54282  \\
    \covtype  & $2^{-6}$ & $2^{-10}$ & {\bf 69.3}  & {\bf 373}  & 62.7  & 10387  & 62.7  & 90774    \\
    \covtype  & $2^{-6}$ & $2^{-6}$ & {\bf 70.0} & {\bf 625} & 68.6  & 14398 & 68.6  & 76508   \\
    \covtype  & $2^{-6}$ & $2^{1}$ & 78.0  & {\bf 346} & {\bf 79.5} & 5312 & {\bf 79.5} & 77591  \\
    \covtype  & $2^{-6}$ & $2^{6}$ & 87.9 & {\bf 895} & {\bf 87.9} & 8886 & {\bf 87.9}  & 120512  \\
    \covtype & $2^{-6}$ & $2^{10}$ & {\bf 95.6}  & {\bf 1238} & 95.4  & 7581  & 95.6 & 123396   \\
    \covtype  & $2^{1}$ & $2^{-10}$ & {\bf 70.7}  & {\bf 433}  & 70.4 & 25120   & 70.4  & 88725  \\
    \covtype  & $2^{1}$ & $2^{-6}$ & {\bf 77.9} & {\bf 1000} & 77.1 & 18452  & 77.1 & 69101   \\
    \covtype   & $2^{1}$ & $2^{1}$ & {\bf 86.5}  & {\bf 421} & 84.1 & 11411  & 84.1  & 50890  \\
    \covtype  & $2^{1}$ & $2^{6}$ & {\bf 95.6} & {\bf 299} & 95.3 & 8714  & 95.3 & 117123   \\
    \covtype   & $2^{1}$ & $2^{10}$ & 95.7  & {\bf 882} & {\bf 96.1} & 5349  & & $>$300000   \\
    \covtype & $2^{6}$ & $2^{-10}$ & 79.3  & {\bf 1360}  & {\bf 81.8} & 34181  & 81.8  & 105855   \\
    \covtype  & $2^{6}$ & $2^{-6}$ & 81.3 & {\bf 2314} & {\bf 84.3} & 24191  & {\bf 84.3}  & 108552   \\
    \covtype   & $2^{6}$ & $2^{1}$ & 90.2  & {\bf 957} & {\bf 91.3} & 14099  & {\bf 91.3}  & 75596  \\
    \covtype   & $2^{6}$ & $2^{6}$ & {\bf 96.3} & {\bf 356} & 96.2  & 9510  & 96.2 & 92951   \\
    \covtype  & $2^{6}$ & $2^{10}$ & 95.7  & {\bf 961} & {\bf 95.8}  & 7483  & {\bf 95.8}  & 288567   \\
    \covtype  & $2^{10}$ & $2^{-10}$ & {\bf 80.7}  & {\bf 5979}  & 52.5 & 50149  & 52.5  & 235183   \\
    \covtype   & $2^{10}$ & $2^{-6}$ & {\bf 82.3} & {\bf 8306} & 57.1 & 43488  & & $>$ 300000  \\
    \covtype   & $2^{10}$ & $2^{1}$ & 92.4 & {\bf 4553} &  {\bf 92.7} & 19481  & {\bf 92.7}  & 254130  \\
    \covtype   & $2^{10}$ & $2^{6}$ & 95.7 & {\bf 368} & {\bf 95.9} & 12615  & {\bf 95.9}  & 93231  \\
    \covtype  & $2^{10}$ & $2^{10}$ & {\bf 95.7}  & {\bf 1094} & 95.6  & 10432  & 95.6 & 169918  \\
    \hline
  \end{tabular}
  }
  \vspace{-10pt}
\end{table*}

\begin{figure*}
  \vspace{-10pt}
  \begin{center}
\begin{tabular}{ccc}
  \subfloat[\webspam $C=2^{-10}$]{\includegraphics[width=0.31\textwidth]{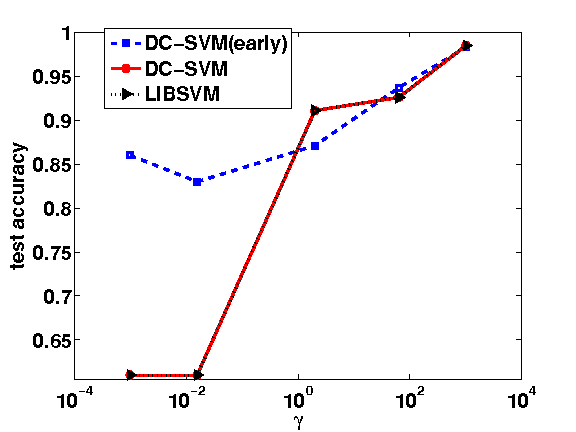}}
&
\subfloat[\webspam $C = 2^{1}$]{\includegraphics[width=0.31\textwidth]{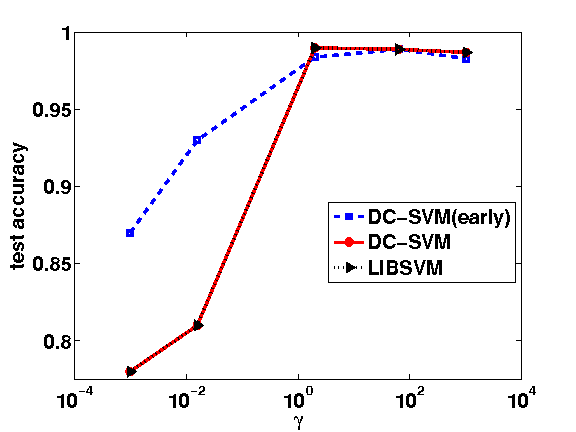}} & 
\subfloat[\webspam $C = 2^{10}$]{\includegraphics[width=0.31\textwidth]{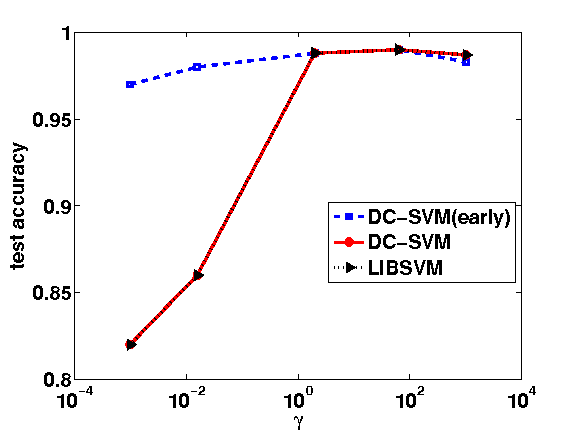}} \\
\subfloat[\webspam $\gamma=2^{-10}$]{\includegraphics[width=0.31\textwidth]{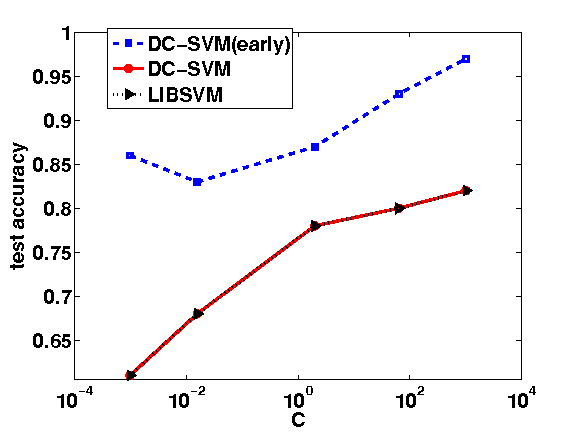}} &
\subfloat[\webspam $\gamma=2^{1}$]{\includegraphics[width=0.31\textwidth]{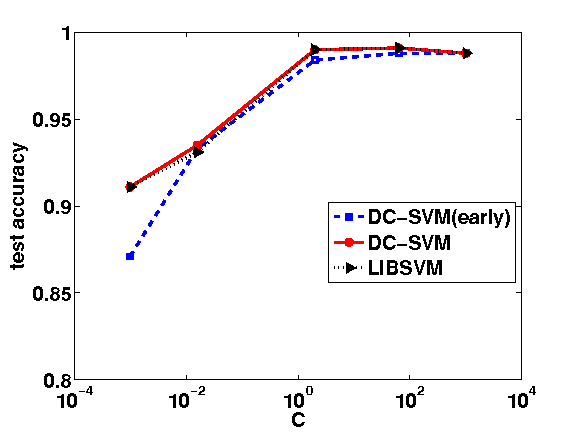}} &
\subfloat[\webspam $\gamma=2^{10}$]{\includegraphics[width=0.31\textwidth]{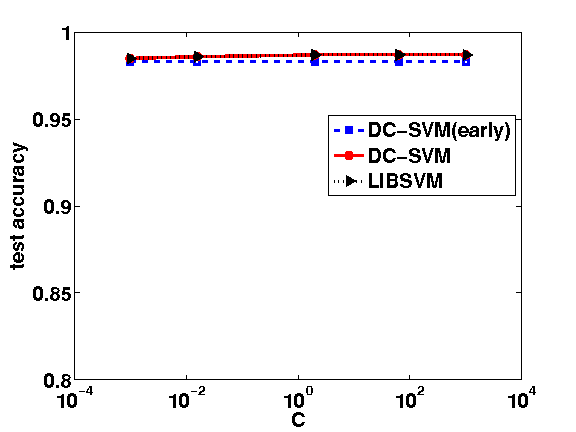}} 
\\
\end{tabular}
\end{center}
\caption{Robustness to the parameters $C, \gamma$ on \webspam dataset. \label{fig:parameter_plot_webspam} }
\vspace{-10pt}
\end{figure*}
\newpage

\begin{table*}[inner sep=0pt]
  \vspace{-20pt}
  \centering
  \caption{Comparison of DC-SVM, DC-SVM (early) and LIBSVM on \census with various parameters $C, \gamma$. DC-SVM (early) is always more than 50 times faster than LIBSVM with similar test accuracy; DC-SVM is faster than LIBSVM under all settings.  }
  \label{tab:c_gamma_census}
  \vspace{-10pt}
\resizebox{12cm}{!}{
  \begin{tabular}{|c|r|r|r|r|r|r|r|r|r|r|}
    \hline
    \multirow{2}{*}{dataset} & \multirow{2}{*}{$C$} & \multirow{2}{*}{$\gamma$} & \multicolumn{2}{|c}{DC-SVM (early)} & 
    \multicolumn{2}{|c}{DC-SVM}  & \multicolumn{2}{|c|}{\LIBSVM} \\ 
    \cline{4-9}
  & & & acc(\%) & time(s) & acc(\%) & time(s) & acc(\%) & time(s) \\
    \hline
    \census & $2^{-10}$ & $2^{-10}$   & {\bf 93.80} & {\bf 161}   & {\bf 93.80} & 2153  &  {\bf 93.80} & 3061  \\
    \census   & $2^{-10}$ & $2^{-6}$   & {\bf 93.80} & {\bf 166}  & {\bf 93.80} & 3316    &   {\bf 93.80}  & 5357 \\
    \census & $2^{-10}$ & $2^{1}$   & 93.61  & {\bf 202} &   {\bf 93.68} & 4215  &   93.66 & 11947 \\
    \census & $2^{-10}$ & $2^{6}$  & 91.96 & {\bf 228}   & {\bf 92.08} & 5104   & {\bf  92.08} & 12693 \\
    \census & $2^{-10}$ & $2^{10}$  & {\bf 62.00} & {\bf 195}   & 56.32 & 4951   &  56.31 & 13604\\
    \census  & $2^{-6}$ & $2^{-10}$   & {\bf 93.80} & {\bf 145}   & {\bf 93.80} & 3912   &   {\bf 93.80} & 6693 \\
    \census  & $2^{-6}$ & $2^{-6}$  & {\bf 93.80} &{\bf 149}   & {\bf 93.80} & 3951  &   {\bf 93.80} & 6568 \\
    \census  & $2^{-6}$ & $2^{1}$   & 93.63 & {\bf 217}  & {\bf 93.66} & 4145   &  {\bf 93.66} & 11945\\
    \census  & $2^{-6}$ & $2^{6}$   & 91.97 & {\bf 230}   & {\bf 92.10} & 4080   & {\bf 92.10} & 9404 \\
    \census & $2^{-6}$ & $2^{10}$   & {\bf 62.58} & {\bf 189}   & 56.32 & 3069   & 56.31 & 9078  \\
    \census  & $2^{1}$ & $2^{-10}$   & 93.80 &{\bf 148}   & {\bf 93.95} & 2057    &  {\bf 93.95} & 1908 \\
    \census  & $2^{1}$ & $2^{-6}$   & 94.55 &{\bf 139}    & {\bf 94.82} & 2018   &  {\bf 94.82} & 1998 \\
    \census  & $2^{1}$ & $2^{1}$   & 93.27 & {\bf 179}  & {\bf 93.36} & 4031    & {\bf 93.36 } & 37023\\
    \census  & $2^{1}$ & $2^{6}$   & 91.96 & {\bf 220}   & {\bf 92.06} & 6148   &  {\bf 92.06} & 33058 \\
    \census  & $2^{1}$ & $2^{10}$   & {\bf 62.78} &{\bf 184}   & 56.31 & 6541   &  56.31 & 35031 \\
    \census & $2^{6}$ & $2^{-10}$   & {\bf 94.66} &{\bf 193}   & {\bf 94.66}  & 3712  &  {\bf 94.69} & 3712 \\
    \census  & $2^{6}$ & $2^{-6}$  & 94.76 & {\bf 164}   & {\bf 95.21} & 2015   &  {\bf 95.21} & 3725 \\
    \census  & $2^{6}$ & $2^{1}$   & 93.10 & {\bf 229}  & {\bf 93.15} & 6814   & {\bf 93.15} & 32993 \\
    \census  & $2^{6}$ & $2^{6}$  & 91.77 & {\bf 243}  & {\bf 91.88} & 9158  &  {\bf 91.88} & 34035 \\
    \census  & $2^{6}$ & $2^{10}$   & {\bf 62.18} & {\bf 210}   & 56.25& 9514   &  56.25  & 36910 \\
    \census  & $2^{10}$ & $2^{-10}$   & 94.83 & {\bf 538}  &  94.83 & 2751   & {\bf 94.85}  & 8729 \\
    \census  & $2^{10}$ & $2^{-6}$ & {\bf 93.89} & {\bf 315} & 92.94& 3548   & 92.94 & 12735 \\
    \census  & $2^{10}$ & $2^{1}$  & 92.89 & {\bf 342} &  92.92 & 9105  & {\bf 92.93} & 52441 \\
    \census  & $2^{10}$ & $2^{6}$  & 91.64 & {\bf 244} &  {\bf 91.81} & 7519    & {\bf 91.81} & 34350 \\
    \census & $2^{10}$ & $2^{10}$  & {\bf 61.14} & {\bf 206} &   56.25 & 5917   & 56.23 & 34906 \\
    \hline
  \end{tabular}
  }
  \vspace{-10pt}
\end{table*}

\begin{figure*}
  \vspace{-10pt}
  \begin{center}
\begin{tabular}{ccc}
  \subfloat[\census $C=2^{-10}$]{\includegraphics[width=0.31\textwidth]{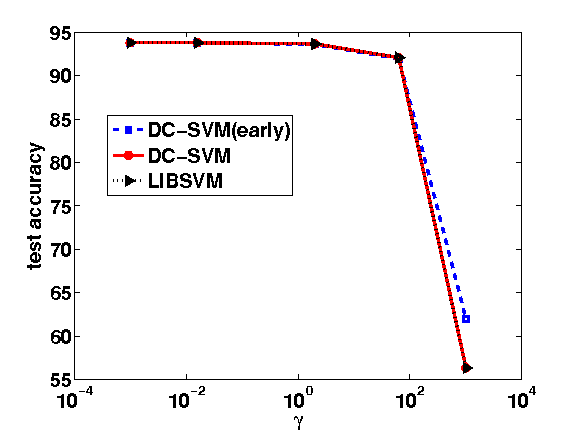}}
&
\subfloat[\census $C = 2^{1}$]{\includegraphics[width=0.31\textwidth]{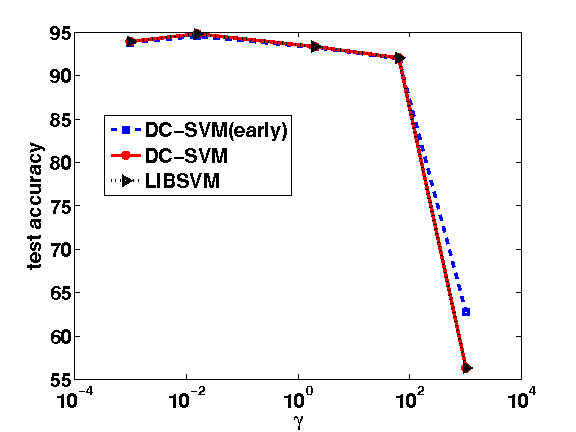}} & 
\subfloat[\census $C = 2^{10}$]{\includegraphics[width=0.31\textwidth]{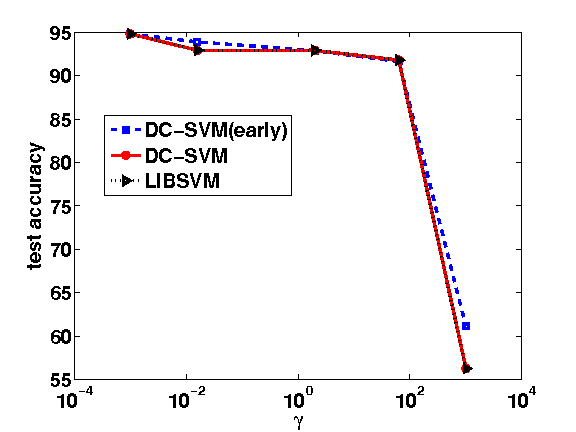}} \\
\subfloat[\census $\gamma=2^{-10}$]{\includegraphics[width=0.31\textwidth]{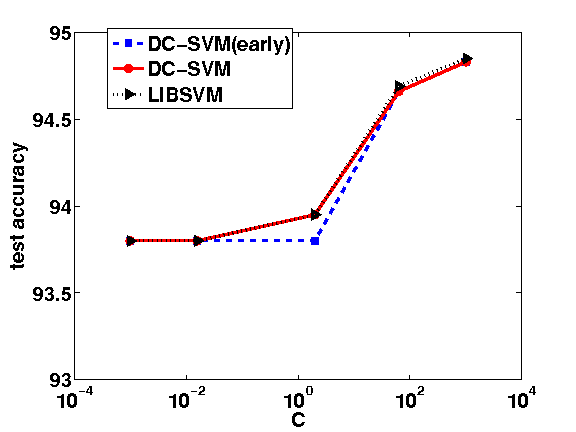}} &
\subfloat[\census $\gamma=2^{1}$]{\includegraphics[width=0.31\textwidth]{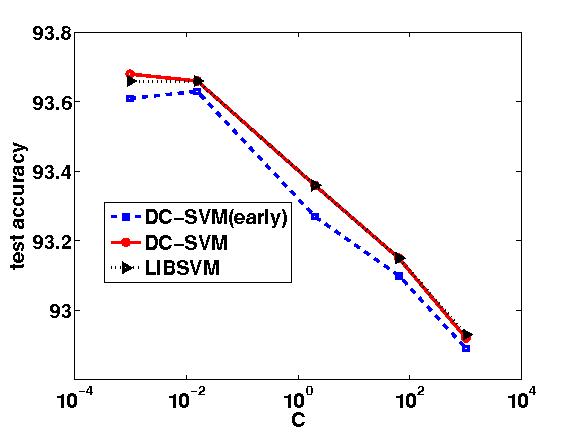}} &
\subfloat[\census $\gamma=2^{10}$]{\includegraphics[width=0.31\textwidth]{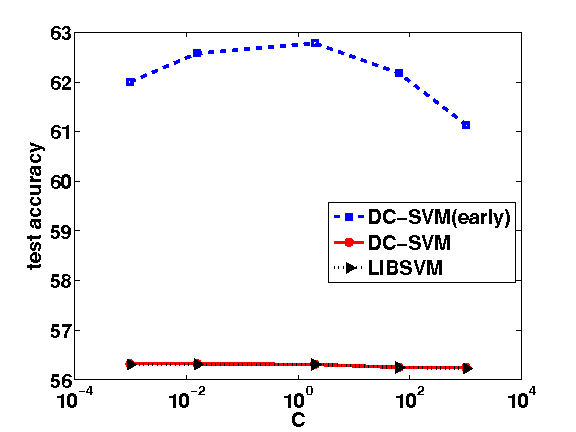}} 
\\
\end{tabular}
\end{center}
\caption{ Robustness to the parameters $C, \gamma$ on \census dataset. \label{fig:parameter_plot_census}}
\vspace{-10pt}
\end{figure*}

\newpage

\end{document}